\documentclass[journal]{IEEEtran}
\usepackage{cite}
\usepackage{comment}
\usepackage{soul}
\usepackage{hyperref}
\hypersetup{
    colorlinks,
    linkcolor={red!50!black},
    citecolor={blue!50!black},
    urlcolor={blue!80!black}
}

\ifCLASSINFOpdf
\else
\fi

\usepackage{amsthm,amssymb,amsmath,optidef,subfigure,xcolor}
\usepackage[printonlyused]{acronym} 
\usepackage{cleveref}
\usepackage{ulem}
\crefname{figure}{Fig.}{Figs.}
\crefname{table}{Tab.}{Tabs.}
\crefname{equation}{Eq.}{Eqs.}

\usepackage{tabu}
\usepackage{float}

\newtheorem{prop}{Proposition}
\newtheorem{corollary}{Corollary}
\newtheorem{lemma}{Lemma}

\def\R{\mathbb{R}}

\def\x{\mathbf{x}}
\def\z{\mathbf{z}}
\def\Z{\mathbf{Z}}
\def\u{\mathbf{u}}
\def\U{\mathbf{U}}
\def\S{\mathbf{S}}
\def\V{\mathbf{V}}
\def\N{\mathbf{N}}
\def\L{\mathcal{L}}

\def\threLi{LI}

\def\c{\mathbf{c}}
\def\a{\mathbf{a}}
\def\H{\mathbf{H}}
\def\S{\mathbf{S}}
\def\J{\mathcal{J}}
\def\r{r}
\def\dimx{n}
\def\dimz{m}
\def\time{t}
\def\noise{\mathbf{e}}
\def\h{\boldsymbol{h}}
\def\ex{\hat{\mathbf{x}}}
\def\ez{\hat{\mathbf{z}}}
\def\eR{\mathbf{R}}

\newcommand{\jc}[1]{\textbf{\color{blue}[[JC: #1]]}}

\newcommand{\rev}[1]{{\color{blue}#1}}

\begin{document}

\title{
Physically Consistent Null Space Alignment for Detection of Low-Magnitude False Data Injection Attacks
}

\author{
}

\IEEEaftertitletext{}
\author{\IEEEauthorblockN{
\textbf{Xin Li}\IEEEauthorrefmark{1},
\textbf{Chenhan Xiao}\IEEEauthorrefmark{2}, 
\textbf{Jonathan Cohen}\IEEEauthorrefmark{1}, 
\textbf{Aviad Elyashar}\IEEEauthorrefmark{3}\IEEEauthorrefmark{4}, 
\textbf{Yang Weng}\IEEEauthorrefmark{2},
\textbf{Rami Puzis}\IEEEauthorrefmark{1}\IEEEauthorrefmark{4}
}

\IEEEauthorblockA{\IEEEauthorrefmark{1}}Faculty of Computer and Information Science, Ben-Gurion-University, Be'er Sheva, Israel \\
\IEEEauthorblockA{\IEEEauthorrefmark{2}School of Electrical, Computer and Energy Engineering, Arizona State University, AZ, USA}\\
\IEEEauthorblockA{\IEEEauthorrefmark{3}Department of Computer Science, Shamoon College of Engineering, Be'er Sheva, Israel}\\
\IEEEauthorblockA{\IEEEauthorrefmark{4}Cyber@BGU, Ben-Gurion University of Negev, Be'er Sheva, Israel}\\
}

\renewcommand{\rev}[1]{#1}

\maketitle

\begin{abstract}
\Acfp{fdia} introducing small measurement perturbations can still cause large deviations in power system state estimation when the injected signals align with the pseudo-null space of the system model. 
Existing model- and data-driven detectors may fail to identify such low-magnitude but high-impact attacks because residual tests ignore changes hidden in the pseudo-null space, while subspace learning methods capture correlation patterns without enforcing physical consistency.
%
\rev{
This paper proposes \ac{ping}, a framework that detects stealthy \acp{fdia} by preserving, through preprocessing, the geometric correspondence between the physical null space and the measurement-derived pseudo-null space.
The key point is a \ac{pscp} step that re-expresses measurements in the physical coordinate frame before subspace extraction. 
We prove that \ac{pscp} preserves the separation between row space and its orthogonal complement, a property that conventional per-feature standardization violates.
This keeps the \ac{svd}-derived pseudo-null subspace aligned with the physical residual space without explicit knowledge of $\H$.
}
%
\rev{Experiments on IEEE 14-, 30-, 57-, and 118-bus systems confirm this principle in practice: stealthy attacks that evade XTM, LSTM, AE and Isolation Forest baselines appear as clear deviations in the aligned subspace, yielding higher F1-score and detection accuracy while remaining robust under partial observability and realistic PMU noise.}

\end{abstract}

\begin{IEEEkeywords}
False data injection attack, small magnitude, pseudo-null space, physical consistency, subspace alignment
\end{IEEEkeywords}

\acresetall
\section{Introduction}
\label{sec:introduction}

The increasing digitalization of power grids has enhanced operational efficiency and situational awareness, but has also expanded the cyber-attack surface of critical infrastructures. 
Past incidents such as the 2015 Ukraine blackout \cite{lightsout}, the 2018 intrusion into U.S. grid networks \cite{cnn2018us}, and the 2022 coordinated attack on U.S. energy entities \cite{cyberattacknews} illustrate how malicious data manipulation can bypass traditional protection layers.
While most existing defenses can handle medium and large intrusions, recent studies and operational reports show that even small, well-coordinated measurement perturbations can destabilize automatic generation control and voltage regulation, leading to wide-area oscillations or hidden economic losses \cite{hong2021data, paul2022modified, mohamed2021false, rahman2014impact}.
These findings have shifted attention from large cyberattacks to stealthy low-magnitude injections that exploit the physics of state estimation to produce large state deviations without visible measurement anomalies \cite{Jafari2021SmallSignalFDIA,Zhang2023FDIAWLS}.

During the past years, \ac{fdia} detection has evolved from classical model-based residual testing to advanced data-driven frameworks.
Traditional Chi-square \acp{bdd} rely on accurate knowledge of the Jacobian matrix and assume stationary measurement noise \cite{abur2004power}, while early data-driven schemes such as \ac{pca}, \ac{knn}, and Isolation Forest detect statistical outliers without explicit models~\cite{MOHAMMADPOURFARD2020105947}.
More recent approaches integrate temporal and spatial correlations. For example, \ac{lstm}-based detectors use forecast–residual mismatches to expose temporal inconsistencies \cite{Yang2020LSTMFDIA, deepSignalAnomalyDetector}, and \ac{gnn} exploit electrical topology for spatial detection \cite{Boyaci2023GNNFDIA, Takiddin2023GeneralizedGNN}.

Although both residual- and learning-based detectors perform well against medium to large, disruptive attacks, they fail to identify small, physically consistent perturbations. 
For example, model-based tests ignore variations within the pseudo-null space of the Jacobian \cite{liu2011false}, where stealthy attacks leave residuals unchanged.
Conversely, data-driven subspace and deep-learning methods, such as \ac{pca}, \acs{svd}\acused{svd}, and \ac{lstm} detectors \cite{kim2014subspace, yu2015blind, anwar2017modeling, rahman2019finding, zu2024self}, treat 
coordinated null space perturbations as noise.
The problem worsens with the growing use of \ac{ai} in the grid for forecasting and dispatch. 
As a result, \ac{ai}-based algorithms may reshape network operating points \cite{shi2020artificial}, causing $\H$ to drift and allowing small null space attacks to mimic legitimate fluctuations while inducing large state estimation errors.


\rev{This challenge motivates a different design philosophy. 
Instead of detecting anomalies within model-defined or 
data-driven subspaces in isolation, we ensure that the preprocessing step preserves the geometric correspondence 
between them, so that the data-driven subspace remains a 
meaningful proxy of the physical null space.
Under normal conditions, measurements lie within $\mathcal{H}_{\mathrm{DC}}$  up to noise, where
$\mathcal{H}_{\mathrm{DC}} = \{\H\x 
: \x \in \R^{\dimx}\}$ denotes the column space of $\H$.
Preprocessing steps prior to \ac{svd} must preserve this range. Otherwise, the empirical residual subspace drifts away from $\mathcal{N}(\H^\top)$ and loses physical interpretability.
While prior research examined the Jacobian null space~\cite{liu2011false} or applied data-driven \ac{svd}/\ac{pca} projections~\cite{yang2022blind,rahman2019finding}, none designed a preprocessing pipeline that conserves this geometric correspondence. By enforcing null space conservation at the preprocessing stage, our framework detects low-magnitude attacks that are missed by both residual and learning-based methods.}

Although directly comparing the model- and data-derived null spaces may appear sufficient, it fails without proper alignment of the measurement space with the physical model.
For example, conventional normalization rescales magnitudes statistically, but not physically, often masking small attacks by removing geometric cues tied to network laws.
To overcome this challenge, we introduce a \ac{pscp} that reprojects measurements into the model’s coordinate frame before standardization.
This step preserves Kirchhoff-consistent relationships and enables precise subspace comparison, forming the foundation of the proposed \ac{ping} framework.

The proposed \ac{ping} framework is evaluated on IEEE 14-, 30-, 57-, and 118-bus systems with realistic load and \ac{pmu}\acused{pmu} noise conditions.
\ac{ping} consistently detects low-magnitude \acp{fdia} that evade Chi-square and learning-based detectors, maintaining robustness under noise and partial observability.
These results confirm that aligning physical and measurement subspaces, rather than adding complexity, is the key to detecting stealthy low-magnitude attacks.
\rev{We further give an explicit physical interpretation: \ac{pscp} preserves the network topology, branch reactances, and linearized power-balance equations, with \ac{kcl} preserved implicitly in \ac{dc} and explicitly along zero-injection chains in \ac{ac}.}


The rest of this paper is organized as follows. 
\Cref{sec:fdia_pre} reviews the \ac{fdia} problem formulation. 
\Cref{sec:approach} presents the investigated pseudo-null space \ac{fdia} and the proposed detection framework relying on \ac{ping}. 
\Cref{sec:evaluation} describes the numerical experiments and performance evaluation results, and \Cref{sec:conclusion} concludes the paper.



\section{Preliminaries}
\label{sec:fdia_pre}

For analytical clarity, we adopt the \acs{dc}\acused{dc} \ac{se} model as the foundation of our derivation, while noting that the proposed detection framework generalizes naturally to \acs{ac}\acused{ac} systems.  
The \ac{dc} formulation provides a linear structure that clearly reveals the geometric relationships among the measurement, state, and attack spaces. These relationships are central to understanding small, physically consistent \acp{fdia}.

In the \ac{dc} model, the measurement vector $\z = (z_1, \ldots, z_{\dimz}) \in \mathbb{R}^{\dimz}$ is linearly related to the state vector $\x = (x_1, \ldots, x_{\dimx}) \in \mathbb{R}^{\dimx}$, typically representing bus voltage phase angles:
\begin{equation}
    \z = \H \x + \noise,
\end{equation}
where $\H \in \mathbb{R}^{\dimz \times \dimx}$ is the Jacobian matrix determined by the network topology, and $\noise \in \mathbb{R}^{\dimz}$ is additive measurement noise.  
The noise is commonly modeled as Gaussian, $\noise \sim \mathcal{N}(\mathbf{0}, \eR)$, with diagonal covariance $\eR = \mathrm{diag}(\sigma_1^2, \ldots, \sigma_{\dimz}^2)$ that captures sensor heterogeneity.

Given a set of measurements $\z$, the system state is obtained using the \ac{wls} estimator~\cite{abur2004power}:
\begin{equation}
    \ex = \arg\min_{\x} \sum_{i=1}^{\dimz} \frac{(z_i - \H_i \x)^2}{\sigma_i^2},
    \label{eq:SE}
\end{equation}
where $\H_i$ is the $i$th row of $\H$.  
In the \ac{dc} case, this yields a closed-form solution:
\begin{equation}
    \ex = (\H^\top \eR^{-1} \H)^{-1} \H^\top \eR^{-1} \z.
\end{equation}

Under normal operating conditions, the estimated measurements $\ez = \H \ex$ differ from the observed data $\z$ only by random noise.  
Residual-based \ac{bdd} therefore tests whether the residual magnitude exceeds some threshold~\cite{abur2004power, wang2020detection}.  
The residual is
\begin{equation}
    \r = \z - \H \ex = \S \z,
\end{equation}
where $\S = \mathbf{I} - \H(\H^\top \eR^{-1}\H)^{-1}\H^\top \eR^{-1}$ is the residual sensitivity matrix satisfying $\S \H = \mathbf{0}$.  
Because $\S$ projects $\z$ onto the orthogonal complement of the column space of $\H$, any change that lies entirely within $\mathcal{H}_{\mathrm{DC}}$ (i.e., within its column space) remains invisible to the residual.  
This property forms the basis of most undetectable \ac{fdia} constructions.

In the absence of attacks, the \ac{cst} test statistic follows a chi-squared distribution:
\begin{equation}
    \J(\z) = \sum_{i=1}^{\dimz} \frac{(\S_i \z)^2}{\sigma_i^2} \sim \chi^2_{\dimz-\dimx},
\end{equation}
and data are flagged as suspicious if $\J(\z)$ exceeds a threshold $\chi^2_{(\dimz-\dimx),\,1-\alpha}$ for a given significance level $\alpha$.

\subsection{Pseudo-null space and operational interpretation.}  
A central concept for this paper is the {pseudo-null space}. In theory, state perturbations within the exact null space of $\H$ (i.e., $\c\in\mathrm{Null}(\H)$) map to zero measurement change (i.e., $\H \c = \mathbf{0}$). 
In practical transmission \ac{se} formulations, the Jacobian is rank-deficient (reference/slack bus, measurement sparsity, etc.), and measurement/model mismatch, heterogeneous scaling, and numerical errors make the exact null space ill-defined for detection purposes. We therefore use the term {pseudo-null space} to denote the near-null directions of $\H$ (or of a reduced Jacobian) that produce only negligible measurement changes, i.e., vectors $\c_n$ with $\H \c_n \approx \mathbf{0}$ in the presence of realistic noise and modeling error. 
Attacks constructed along these directions can cause significant state bias while leaving residuals (and many statistical detectors) unchanged \cite{mukherjee2025false} (see \Cref{sec:results:fdia,sec:state_level_analysis}).

\subsection{Practical Bridge: Model null space vs.\ Measurement-derived Pseudo-Null Space.}  
In many modern operational settings, the true Jacobian $\H$ is partially unknown, outdated, or effectively time-varying (outsourced preprocessing, heterogeneous \ac{pmu} scaling, \ac{ai}-in-the-loop dispatch, etc.). 
Consequently, computing $\mathrm{Null}(\H)$ may be infeasible or unreliable. 
Therefore, in practice, we infer an empirical pseudo-null space from a short history of measurements using low-rank subspace extraction (e.g., \ac{svd}/\ac{pca}). 
This measurement-derived pseudo-null space encodes the {empirical} geometric directions that carry little measurement magnitude. The detection principle used in this paper is to (i) make the measurement-derived pseudo-null extraction physically meaningful through appropriate normalization and standardization, and then (ii) monitor the projection into that inferred pseudo-null subspace: anomalous increases indicate geometric misalignment consistent with a low-magnitude, null space aligned \ac{fdia}. 
This strategy therefore implements the conceptual comparison between the physical invariance implied by $\H$ and the behavior observed in data, without requiring explicit, perfect knowledge of $\H$.

Although the discussion above focuses on the \ac{dc} model for analytical transparency, the same principle extends to \ac{ac} state estimation.  
In the \ac{ac} case, measurements follow $\z = \h(\x) + \noise$, where $\h(\cdot)$ encodes nonlinear power-flow relationships between voltage magnitudes and phase angles.  
The corresponding \ac{wls} estimator is formulated as
\begin{equation}
    \ex = \arg\min_{\x} (\z - \h(\x))^\top \eR^{-1} (\z - \h(\x)),
\end{equation}
which requires iterative numerical solutions but preserves the same geometric interpretation.
Small perturbations aligned with the local column space of the Jacobian $\mathbf{J}(\x)$ can still induce significant state deviations while evading residual-based tests.

\section{\acf{ping} Framework}
\label{sec:approach}

\begin{figure*}[t]
\centering
\includegraphics[width=0.9\linewidth]{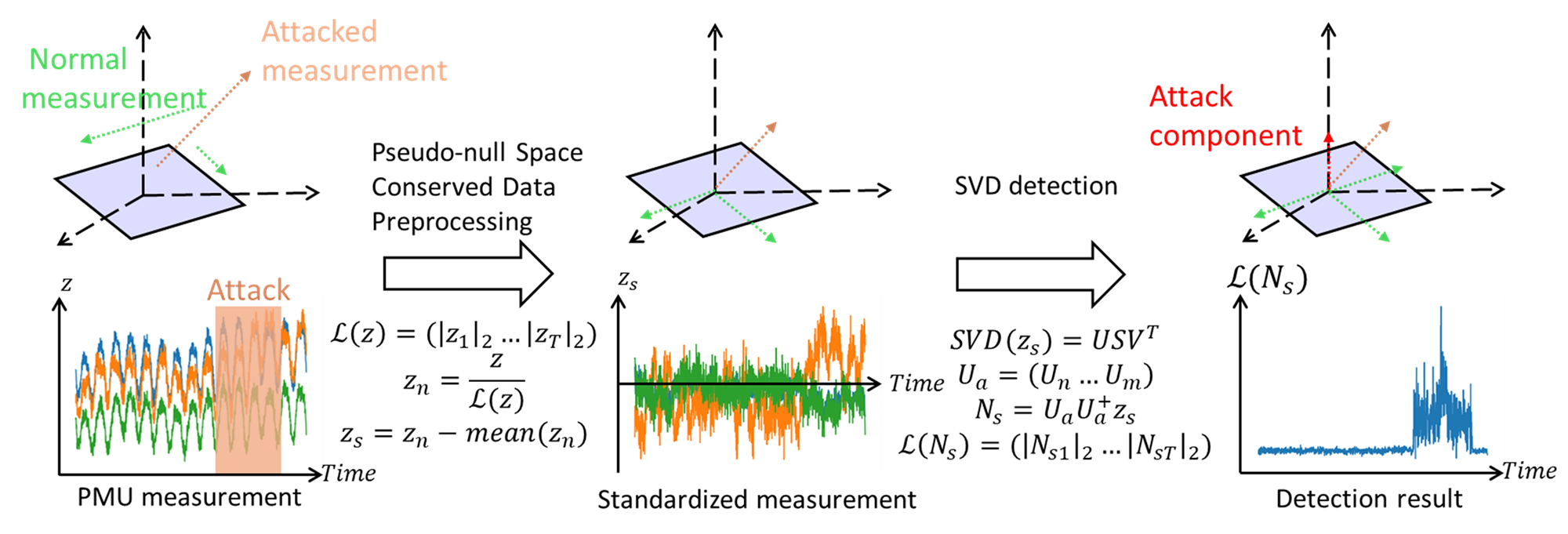}
\caption{Overview of the proposed \ac{ping} framework. The method first performs pseudo-null space-conserved preprocessing and then applies low-rank subspace extraction to isolate stealthy FDIA components.}
\label{fig:big_det}
\end{figure*}

Modern grids operate under highly dynamic, heterogeneous, and \ac{ai}-assisted conditions.  
Data streams originate from diverse sources, including \acp{pmu}, smart meters, and cloud analytics, while adaptive control continually reshapes operating points and effective Jacobians.  
These evolving data manifolds invalidate the fixed-model assumptions underlying most \ac{fdia} detection methods, making both residual-based and learning-based detectors unreliable.  
This section introduces the \ac{ping} framework, which targets low-magnitude, geometry-preserving \acp{fdia} that exploit these null space structures.  
We first formulate the pseudo-null space \ac{fdia} in \Cref{sec:attack}, then describe the invariant detection principle and its practical realization through the \ac{pscp}–\ac{svd} procedure in \Cref{sec:detection}.

\subsection{Pseudo-Null Space \ac{fdia}}
\label{sec:attack}
We assume two capabilities commonly attributed to adversaries:
(1) the ability to modify arbitrary measurement values, e.g., by compromising the data flow; and
(2) the ability to estimate partial or approximate system parameters using historical data.
Prior works show that even without explicit topology access, the system Jacobian $\H$ can be inferred from spatiotemporal correlations~\cite{costilla2022attack}.

Classic stealthy \acp{fdia}~\cite{liu2011false} exploit the inferred Jacobian to inject $\a = \H\c$, where $\c$ is any vector in the state space.
Because $\S\H = \mathbf{0}$, the resulting residual $\|\S(\z+\H\c)\|_2^2 = \|\S\z\|_2^2$ remains unchanged, enabling the attack to bypass \acp{bdd}.
However, such model-based \acp{fdia} distort the geometry of the measurement manifold, producing persistent temporal patterns detectable by machine-learning methods~\cite{zu2024self}.
\rev{Here, the measurement manifold in the \ac{dc} model refers to the set of all physically feasible measurements generated by the DC state-estimation model. Since the DC measurement function is linear, this set is
\(
\mathcal{H}_{\mathrm{DC}}
=
\{\H\x \mid \x\in\mathbb{R}^{\dimx}\}.
\)
Because the number of measurements is typically larger than the number of states, \(\mathcal{H}_{\mathrm{DC}}\) is a low-dimensional linear subspace embedded in the measurement space \(\mathbb{R}^{\dimz}\). Normal measurements lie near this subspace, with deviations mainly caused by measurement noise.
}
In this paper, we consider a new class of {pseudo-null space attacks} that regain stealth under non-stationary and heterogeneous conditions by manipulating measurements along directions that remain approximately invariant even as the system Jacobian drifts.

\rev{Under the \ac{dc} model, $\H$ is rank-deficient due to the reference bus.
After removing the reference-bus column, we obtain a reduced Jacobian $\H_r$.
We define the {$\tau$-pseudo-null space} of $\H_r$ as
\begin{equation}
\mathcal N_\tau(\H_r)
=
\left\{
\c_n \in \mathbb R^{\dimx-1}
:
\|\H_r \c_n\|_2 \le \tau \|\c_n\|_2
\right\},
\label{eq:pseudo-null-def}
\end{equation}
where $\tau>0$ is a prescribed tolerance.
Equivalently, if $\H_r = U\Sigma V^\top$, then $\mathcal N_\tau(\H_r)$ is spanned by the right singular vectors associated with singular values below $\tau$.}

\rev{
An attacker constructs a stealth perturbation $\a = \H_r\c$ with $\c = \c_n + \c_e$, where $\c_n\in\mathcal N_\tau(\H_r)$ and $\c_e \sim \mathcal{N}(\mathbf{0}, \sigma_c^2I_{n-1})$ adds small Gaussian fluctuations.
Because $\|\H_r\c_n\|_2 \le \tau \|\c_n\|_2$, the deterministic component induced by $\c_n$ is small, so the effective perturbation $\a$ becomes statistically close to \ac{pmu} noise when $\tau$ is properly chosen.}
By tuning the variance of $\c_e$ to half the measurement-noise variance, the injected modification hides within nominal uncertainty while altering the state estimate.
Thus, the attack reshapes state estimation without triggering either \ac{bdd} alarms or data-driven detectors that depend on statistical stationarity (see \Cref{sec:results:fdia}, \Cref{fig:detection_comparison2,fig:detection_comparison}).

\rev{
The parameter $\tau$ controls the tradeoff between stealth and attack flexibility: smaller $\tau$ yields more stealthy but less diverse perturbations, while larger $\tau$ enlarges the feasible attack subspace at the cost of increased detectability.
In practice, $\tau$ can be chosen by singular-value truncation or relative to the measurement noise level.
}

\subsubsection{Extension to \ac{ac} model}
\label{sec:attack:ac}

\rev{
The pseudo-null space concept naturally extends to the \ac{ac} model governed by nonlinear measurement equations $\z=\h(\x)+\noise$, where \(\h(\cdot)\) maps the system state, including voltage magnitudes and phase angles, to nonlinear power-flow measurements such as active/reactive power injections and branch flows.
In this setting, the corresponding measurement manifold is the set of all physically feasible \ac{ac} measurements,
\(
\mathcal{H}_{\mathrm{AC}}
=
\{\h(\x)\mid \x\in\mathbb{R}^{\dimx}\}.
\label{eq:ac_manifold}
\)
Unlike the \ac{dc} case, where the measurement manifold is a linear subspace generated by \(H\), \(\mathcal{H}_{\mathrm{AC}}\) is generally a nonlinear low-dimensional manifold embedded in the measurement space \(\mathbb{R}^{\dimz}\). Around a given operating point \(\x\), its local geometry is characterized by the Jacobian \(\boldsymbol{J}(\x)\).
In this case, a stealthy perturbation should satisfy
\begin{equation}
\h(\x+\c_n)-\h(\x)\approx\mathbf{0}.
\end{equation}
Linearizing $\h(\cdot)$ around $\x$ gives $\h(\x+\c_n)\approx \h(\x)+\boldsymbol{J}(\x)\c_n$, where $\boldsymbol{J}(\x)$ is the local Jacobian.
Accordingly, we define the local $\tau$-pseudo-null space as
\begin{equation}
\mathcal N_\tau(\boldsymbol J(\x))
=
\left\{
\c_n
:
\|\boldsymbol J(\x)\c_n\|_2 \le \tau \|\c_n\|_2
\right\}.
\end{equation}
Choosing $\c_n\in\mathcal N_\tau(\boldsymbol J(\x))$ ensures that the first-order measurement variation remains small.
Since $\boldsymbol{J}(\x)$ evolves with system conditions, this construction explains why small, physically consistent perturbations remain hard to detect in realistic \ac{ac} settings.
}

\rev{
We emphasize that the proposed pseudo-null space FDIA represents a worst-case threat model for evaluating detector robustness. Specifically, the attacker is assumed to have access to an abundant volume of normal historical measurements and can use these data to learn or approximate the measurement manifold and its pseudo-null directions. Similar assumptions are common in blind FDIA studies, where the attacker does not require exact system parameters but can infer attack directions from measurement data. The effectiveness of such attacks depends on data availability and data quality: with fewer normal samples, partial meter access, inaccurate historical data, or distribution drift, the learned measurement manifold becomes less accurate and the attack performance is expected to degrade. Therefore, the attack proposed in Section~\ref{sec:attack} should be interpreted as a stress-test scenario for evaluating whether the detection method introduced in Section~\ref{sec:detection} can identify low-magnitude, physically consistent FDIAs under strong adversarial conditions.
}

\subsection{Detection via Pseudo-Null Space Projection}
\label{sec:detection}

Traditional \ac{bdd} schemes~\cite{liu2011false,hug2012vulnerability} cannot detect attacks aligned with $\mathcal{H}_{\mathrm{DC}}$.  
Machine-learning detectors, such as deep neural and sequence models~\cite{tahar2022machine,sa2023false}, can track complex temporal relationships but fail when adversaries exploit pseudo-null directions that mimic legitimate noise.  
This presumed advantage of attackers calls for development of a detection mechanism that operates without assuming fixed or fully known measurement-space models.

\rev{
We note, for clarity, that 
\ac{pscp} preserves the geometric structure that links $\mathcal{N}(\H^\top)$ with its data-driven counterpart (\Cref{prop:null_conservation}), so the \ac{svd}-based $\U_a$ serves as an empirical proxy for the physical null space. 
Throughout the paper, "alignment" refers to this preserved correspondence, rather than a direct subspace comparison.
}

The proposed \ac{ping} framework dynamically identifies the pseudo-null space from streaming data and monitors its physical consistency without requiring explicit system models.
As illustrated in \Cref{fig:big_det}, \ac{ping} performs two main steps:  
(1) \textbf{\acf{pscp}} to mitigate scaling bias and floating-point artifacts, and  
(2) \textbf{low-rank subspace extraction} via \ac{svd} to isolate hidden null space deviations.

Let $\Z=(\z_1,\ldots,\z_\time)\in\R^{\dimz\times \time}$ denote the time-series measurement matrix.  
Applying \ac{svd} yields $\Z=\U\S\V^\top$, where $\U=(\u_1,\ldots,\u_{\dimz})$ and $\S$ is diagonal.  
The dominant $(\dimx-1)$ left singular vectors $\U_r=(\u_1,\ldots,\u_{\dimx-1})$ represent the physical row space, while the remaining vectors $\U_a=(\u_{\dimx},\ldots,\u_{\dimz})$ span the pseudo-null subspace.  
Orthogonal projection isolates the null space component:
\begin{equation}\label{eq:pseudo-test}
\N = \U_a \U_a^+ \Z,
\end{equation}
where $\U_a^+ = (\U_a^\top \U_a)^{-1}\U_a^\top$ is the pseudoinverse.

Direct projection on raw $\Z$ amplifies numerical errors due to heterogeneous \ac{pmu} scaling.  
Therefore, \ac{ping} introduces a normalization step, expressing $\Z$ as $\Z_n = \Z \text{diag}({\L(\Z)})^{-1}$, where $\L(\Z)=(\|\z_1\|_2,\ldots,\|\z_t\|_2)$.  
Projection on normalized data  yields:
\begin{equation}\label{eq:pseudo-test-norm}
\N_n = \U_a \U_a^+ \Z_n.
\end{equation}

\rev{Before establishing the separability result of \Cref{prop:opleak}, we state the key property of {pseudo-null  space conserved} preprocessing.

\begin{prop}[null space Conservation]
\label{prop:null_conservation}
Let $\z_i = \H \x_i + \noise_i$ be the clean measurement at time 
$\time$, and let $\U_a \U_a^{+}$ be the orthogonal projector onto 
the left null space $\mathcal{N}(\H^\top)$. The two operations in 
\ac{pscp}, per-sample $\ell_2$ normalization $\z_i \mapsto \z_i / 
\|\z_i\|_2$ and $\z_i \mapsto \z_i - \mathrm{mean}(\Z)$, both preserve $\mathcal{H}_{\mathrm{DC}}$.
\begin{equation}
\U_a \U_a^{+} \z_i = \mathbf{0} 
\Longleftrightarrow 
\U_a \U_a^{+} \tilde{\z}_i = \mathbf{0},
\end{equation}
where $\tilde{\z}_i$ is the preprocessed sample.
\end{prop}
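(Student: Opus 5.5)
The key observation is that $P:=\U_a\U_a^{+}$ is the orthogonal projector onto $\mathcal{N}(\H^\top)=\mathcal{H}_{\mathrm{DC}}^{\perp}$. Hence $P\z=\mathbf{0}$ holds if and only if $\z\in\mathcal{H}_{\mathrm{DC}}$. Both preprocessing maps are built from linear operations, and the subspace $\mathcal{H}_{\mathrm{DC}}$ is closed under scaling and subtraction. So the proof reduces to checking that each map sends $\mathcal{H}_{\mathrm{DC}}$ into itself, and that its inverse does the same wherever the map is invertible. I will interpret ``clean'' as the noiseless case $\noise_i=\mathbf{0}$ (so $\z_i=\H\x_i$); otherwise the forward implication is vacuous and only linearity of $P$ matters.

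\emph{Normalization.} Take $\tilde{\z}_i=\z_i/\|\z_i\|_2$ with $\z_i\neq\mathbf{0}$. Since $P$ is linear, $P\tilde{\z}_i=P\z_i/\|\z_i\|_2$, and a nonzero scalar multiple vanishes exactly when the vector does. This gives the equivalence directly. It also shows that $\tilde{\z}_i=\H(\x_i/\|\z_i\|_2)\in\mathcal{H}_{\mathrm{DC}}$. The same argument applies column-wise to $\Z_n=\Z\,\mathrm{diag}(\L(\Z))^{-1}$.

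\emph{Centering.} Take $\tilde{\z}_i=\z_i-\bar{\z}$ with $\bar{\z}=\frac1t\sum_j\z_j$. By linearity, $P\tilde{\z}_i=P\z_i-\frac1t\sum_j P\z_j$.
\begin{itemize}
\item Forward direction: if every sample is clean, then $P\z_j=\mathbf{0}$ for all $j$, so $P\bar{\z}=\mathbf{0}$ and $P\tilde{\z}_i=\mathbf{0}$. Equivalently, $\bar{\z}=\H\bar{\x}$ and $\tilde{\z}_i=\H(\x_i-\bar{\x})$.
\item Reverse direction: this needs $P\bar{\z}=\mathbf{0}$, i.e., that the window mean lies in $\mathcal{H}_{\mathrm{DC}}$, which holds under the proposition's hypothesis that the data are $\H\x_j$. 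Then $P\z_i=P\tilde{\z}_i+P\bar{\z}=P\tilde{\z}_i$.
\end{itemize}
The composition of the two steps preserves $\mathcal{H}_{\mathrm{DC}}$ because each step does.

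The main obstacle is stating the reverse direction of the centering step honestly. If even one sample in the window, e.g.\ an attacked one, has a null-space component, then the mean is contaminated and the equivalence can fail. I will therefore state the result for windows where the remaining samples (or the reference mean) are clean. I will also remark that with the mean computed on a clean training window, $P\tilde{\z}_i=P\z_i$ exactly, so the null-space projection is unchanged.

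Finally, I will contrast this with per-feature standardization $\z\mapsto D^{-1}(\z-\mu)$, where $D$ is diagonal and not a multiple of the identity. In that case $D^{-1}\mathcal{H}_{\mathrm{DC}}\neq\mathcal{H}_{\mathrm{DC}}$ in general, so the equivalence breaks. This explains why the proposition is restricted to per-sample scalings.
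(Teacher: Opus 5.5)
Your proposal is correct and follows the paper's own route: linearity of $\U_a\U_a^{+}$, plus the fact that $\mathcal{H}_{\mathrm{DC}}$ is closed under positive scaling and under subtraction of a mean lying in $\mathcal{H}_{\mathrm{DC}}$. It is more careful than the paper, which writes out only the forward inclusion and handles noise in the centering step through the asymptotic claim $\mathrm{mean}(\mathbf{E})\to\mathbf{0}$, whereas you make the needed condition $\U_a\U_a^{+}\bar{\z}=\mathbf{0}$ explicit. One small fix: in your ``remaining samples clean'' variant the window mean still contains $\z_i$, so $\U_a\U_a^{+}\bar{\z}\neq\mathbf{0}$ in general; instead use $\U_a\U_a^{+}\tilde{\z}_i=(1-1/t)\,\U_a\U_a^{+}\z_i$ (valid when all other samples are clean), which gives the equivalence for $t\ge 2$.
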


\begin{proof}
{(i) Normalization.} For $\z_i = \H \x_i \in \mathcal{H}_{\mathrm{DC}}$, we have $\z_i / \|\z_i\|_2 = \H (\alpha_i \x_i)$ with 
$\alpha_i = 1/\|\z_i\|_2 > 0$. Thus $\alpha_i \z_i \in \mathcal{H}_{\mathrm{DC}}$ and $\U_a \U_a^{+} (\alpha_i \z_i) = \mathbf{0}$.

{(ii) Centering.} $\mathrm{mean}(\Z) = \H\,\mathrm{mean}
(\mathbf{X}) + \mathrm{mean}(\mathbf{E})$. Since $\mathrm{mean}
(\mathbf{E}) \to \mathbf{0}$ asymptotically, $\mathrm{mean}(\Z) 
\in \mathcal{H}_{\mathrm{DC}}$. By linearity, $\U_a \U_a^{+}(\z_i - 
\mathrm{mean}(\Z)) = \mathbf{0}$.

In contrast, the per-feature standardization $z$-score satisfies
$z_{i,t} \mapsto (z_{i,t} - \mu_i)/\sigma_i$ which corresponds to a 
diagonal rescaling $\mathbf{D}^{-1}\z_i$ with $\mathbf{D} = 
\mathrm{diag}(\sigma_i)$. Since $\mathbf{D}^{-1}\H \neq \H$ in 
general, $\mathbf{D}^{-1}\mathcal{H}_{\mathrm{DC}} \neq \mathcal{H}_{\mathrm{DC}}$, so $\U_a \U_a^{+} (\mathbf{D}^{-1}\z_i) \neq \mathbf{0}$. 
The empirical pseudo-null subspace extracted from $z$-scored data therefore 
does not coincide with $\mathcal{N}(\H^\top)$.
\end{proof}

\Cref{prop:null_conservation} shows that preprocessing preserves the separation between $\mathcal{H}_{\mathrm{DC}}$ and its 
orthogonal complement, so the \ac{svd}-derived $\U_a$ keeps isolating noise from signal even when $\H$ is unknown or 
drifting. This establishes the physical consistency of \ac{ping}.}

\paragraph{Standardization for Improved Separation}  
Normalization alone does not remove dataset-wide bias.  
\Cref{prop:opleak} shows that performing orthogonal projection on standardized measurements $\Z_s = \Z_n - \mathrm{mean}(\Z_n)$ enhances separability between attacked and normal samples.

\rev{
\begin{prop}[Operating-Point Leakage Removal]
\label{prop:opleak}
Let $\bar{\U}_a$ be the noise-free pseudo-null basis satisfying $\bar{\U}_a\bar{\U}_a^{+}\H = \mathbf{0}$, and let $\U_a$ denote the empirical pseudo-null basis estimated from a finite, noisy window $\Z \in \R^{\dimz \times \time}$ at noise level $\sigma$.
Then the residual leakage of the operating-point baseline through $\U_a$ satisfies
\begin{equation}
\bigl\|\U_a\U_a^{+}\,\H\bar{\x}\bigr\| = \mathcal{O}\!\left(\tfrac{\sigma}{\sqrt{\time}}\,\|\H\bar{\x}\|\right),
\label{eq:leakbound}
\end{equation}
where $\bar{\x} = \tfrac{1}{\time}\sum_{i=1}^{\time}\x_i$.
After \ac{pscp} centering, the projected sample reduces to
\begin{align}
    &\U_a\U_a^{+}\bigl(\z_t - \mathrm{mean}(\Z)\bigr) = \\&\U_a\U_a^{+}\bigl(\H\Delta\x_t + \a_t - \bar{\a} + \noise_t - \bar{\noise}\bigr),
\label{eq:centered}
\end{align}
with $\Delta\x_t = \x_t - \bar{\x}$, $\bar{\a} = \tfrac{1}{\time}\sum_i \a_i$, and $\bar{\noise} = \tfrac{1}{\time}\sum_i \noise_i$.
The persistent leakage of \eqref{eq:leakbound} is canceled.
\end{prop}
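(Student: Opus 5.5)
The plan has two parts: first bound the leakage of the operating-point baseline through the empirical basis using subspace perturbation, then verify the centering identity \eqref{eq:centered} algebraically.

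\textbf{Leakage bound.} Write $\Z=\H\mathbf{X}+\mathbf{E}$, where $\mathbf{X}=(\x_1,\ldots,\x_\time)$ and $\mathbf{E}$ collects the noise. Consider the sample second-moment matrix $\frac{1}{\time}\Z\Z^\top$. Its left singular subspaces coincide with those of $\Z$. Expanding,
\[
\tfrac{1}{\time}\Z\Z^\top=\tfrac{1}{\time}\H\mathbf{X}\mathbf{X}^\top\H^\top+\tfrac{1}{\time}(\H\mathbf{X}\mathbf{E}^\top+\mathbf{E}\mathbf{X}^\top\H^\top)+\tfrac{1}{\time}\mathbf{E}\mathbf{E}^\top.
\]
The first term has range $\mathcal{H}_{\mathrm{DC}}$. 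Its orthogonal complement is exactly the range of $\bar{\U}_a\bar{\U}_a^{+}$. The last term concentrates around $\eR$. To first order, the part of $\eR$ that couples the two subspaces is what matters; if $\eR$ is near-isotropic ($\sigma^2\mathbf{I}$), it does not rotate the subspaces at all. The cross terms are zero-mean with fluctuations of order $\sigma\|\H\mathbf{X}\|/\sqrt{\time}$.

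Apply the Davis--Kahan $\sin\Theta$ theorem with the gap given by the smallest nonzero eigenvalue of $\frac{1}{\time}\H\mathbf{X}\mathbf{X}^\top\H^\top$. This gives
\[
\|\U_a\U_a^{+}-\bar{\U}_a\bar{\U}_a^{+}\|=\mathcal{O}(\sigma/\sqrt{\time}).
\]
Because $\bar{\U}_a\bar{\U}_a^{+}\H\bar{\x}=\mathbf{0}$, we then have
\[
\|\U_a\U_a^{+}\H\bar{\x}\|=\|(\U_a\U_a^{+}-\bar{\U}_a\bar{\U}_a^{+})\H\bar{\x}\|\le\|\U_a\U_a^{+}-\bar{\U}_a\bar{\U}_a^{+}\|\,\|\H\bar{\x}\|,
\]
which is \eqref{eq:leakbound}.

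\textbf{Centering identity.} Under attack, $\z_t=\H\x_t+\a_t+\noise_t$, so
\[
\mathrm{mean}(\Z)=\H\bar{\x}+\bar{\a}+\bar{\noise}.
\]
Subtracting gives $\z_t-\mathrm{mean}(\Z)=\H\Delta\x_t+\a_t-\bar{\a}+\noise_t-\bar{\noise}$. Applying the linear projector $\U_a\U_a^{+}$ yields \eqref{eq:centered}.

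The $\H\bar{\x}$ term no longer appears, so the persistent leakage is cancelled. What remains is $\U_a\U_a^{+}\H\Delta\x_t$. By the same perturbation bound this is $\mathcal{O}(\sigma\|\H\Delta\x_t\|/\sqrt{\time})$, and it fluctuates around zero rather than carrying a fixed bias. This is consistent with the centering part of \Cref{prop:null_conservation}.

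\textbf{Remarks and main obstacle.} The normalized case $\Z_n$ reduces to the same argument. By \Cref{prop:null_conservation}, per-sample scaling keeps the clean columns in $\mathcal{H}_{\mathrm{DC}}$, so one only needs to replace $\x_i$ by $\alpha_i\x_i$.

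The main obstacle is making the perturbation step rigorous. Three points need care:
\begin{itemize}
\item The noise covariance $\eR$ is diagonal but heterogeneous. Its off-block component $\bar{\U}_a^\top\eR\,\U_r$ could give an $\mathcal{O}(\sigma^2/\mathrm{gap})$ rotation that does not vanish as $\time$ grows. I would first assume $\eR\approx\sigma^2\mathbf{I}$ to first order, or absorb this term into the constant.
\item The spectral gap must be bounded below. This requires $\mathbf{X}$ to be sufficiently exciting, so that $\frac{1}{\time}\mathbf{X}\mathbf{X}^\top$ is uniformly nonsingular on the reduced state space.
\item The cross-term concentration should be handled with a matrix Bernstein or Gaussian norm bound. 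The bound holds with high probability, which is how the $\mathcal{O}(\cdot)$ in \eqref{eq:leakbound} should be read.
\end{itemize}
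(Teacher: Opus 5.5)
Your proposal is correct and follows essentially the same route as the paper. Davis--Kahan applied to $\tfrac{1}{\time}\Z\Z^\top$ gives $\|\U_a\U_a^{+}-\bar{\U}_a\bar{\U}_a^{+}\|=\mathcal{O}(\sigma/\sqrt{\time})$. The leakage bound then follows from $\U_a\U_a^{+}\H\bar{\x}=(\U_a\U_a^{+}-\bar{\U}_a\bar{\U}_a^{+})\H\bar{\x}$, and the centered identity is direct substitution of $\mathrm{mean}(\Z)=\H\bar{\x}+\bar{\a}+\bar{\noise}$.

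Your version is in fact more careful than the paper's on two points. You take the gap to be the smallest nonzero eigenvalue of $\tfrac{1}{\time}\H\mathbf{X}\mathbf{X}^\top\H^\top$ (with an excitation condition on the states), whereas the paper uses $\sigma_{\dimx-1}(\H)-\sigma_{\dimx}(\H)$. You also flag that a heterogeneous diagonal $\eR$ would cause a rotation that does not vanish as $\time$ grows, which the paper ignores.
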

\begin{proof}
\textit{Bound \eqref{eq:leakbound}.}
Let $\bar{\mathbf{P}}_a = \bar{\U}_a\bar{\U}_a^{+}$ denote the noise-free projector, which satisfies $\bar{\mathbf{P}}_a\H = \mathbf{0}$ by definition.
\\
By the Davis--Kahan theorem~\cite{yu2015useful}, the empirical projector $\U_a\U_a^{+}$ deviates from $\bar{\mathbf{P}}_a$ by:
\\
Assume a constant spectral gap $\delta_{\text{gap}} > 0$ separating the row-space singular values of $\H$ from the pseudo-null directions,
i.e.,
\begin{equation}
\delta_{\text{gap}} =\sigma_{\dimx-1}(\H) - \sigma_{\dimx}(\H) >0,
\label{eq:gap}
\end{equation}
where $\sigma_{k}(\H)$ denotes the $k$-th singular value of $\H$
in decreasing order. Applying the Davis--Kahan $\sin\Theta$ theorem to the sample covariance $\hat{\boldsymbol{\Sigma}} = \tfrac{1}{\time}\Z\Z^{\top}$,
and absorbing the effective dimension into the constant, yields
\begin{equation}
\bigl\|\U_a\U_a^{+} - \bar{\mathbf{P}}_a\bigr\|
=\mathcal{O}\!\left(\frac{\sigma}{\sqrt{\time}\,\delta_{\text{gap}}}\right)
=\mathcal{O}\!\left(\frac{\sigma}{\sqrt{\time}}\right),
\label{eq:dk-bound}
\end{equation}
where the second equality treats $\delta_{\text{gap}}$ as a constant.
Since $\bar{\mathbf{P}}_a\H\bar{\x} = \mathbf{0}$,
\begin{align}
    &\bigl\|\U_a\U_a^{+}\H\bar{\x}\bigr\| =
    \\&\bigl\|(\U_a\U_a^{+} - \bar{\mathbf{P}}_a)\H\bar{\x}\bigr\| \leq \bigl\|\U_a\U_a^{+} - \bar{\mathbf{P}}_a\bigr\|\,\|\H\bar{\x}\|,
\end{align}
which yields \eqref{eq:leakbound}.
Substituting $\mathrm{mean}(\Z) = \H\bar{\x} + \bar{\a} + \bar{\noise}$ into $\z_t - \mathrm{mean}(\Z)$ and applying $\U_a\U_a^{+}$ gives \eqref{eq:centered} directly.
\end{proof}
The dominant $\mathcal{O}(\sigma/\sqrt{\time})\,\|\H\bar{\x}\|$ leakage is removed because $\bar{\x}$ is persistent across the window.
The remaining term $\U_a\U_a^{+}\H\Delta\x_t$ is governed by per-sample state variation $\|\Delta\x_t\|$ rather than $\|\H\bar{\x}\|$, and is therefore much smaller in regimes where the operating point varies slowly relative to the window length.
\\
\begin{corollary}[Asymmetric Subspace-Estimation Requirement]
\label{cor:asym}
Suppose the attacker estimates the row space of $\H$ from a noisy window of length $\time_a$ at noise level $\sigma_a$, and constructs an attack $\a$ contained in the estimated row space. 
\\
Decomposed against the true $\H$, write $\a = \a^{\mathrm{row}} + \a^{\mathrm{null}}$ with $\a^{\mathrm{row}} \in \mathcal{R}(\H)$ and $\a^{\mathrm{null}} \in \mathcal{N}(\H^{\top})$. 
Then
\begin{equation}
\bigl\|\a^{\mathrm{null}}\bigr\| = \mathcal{O}\!\left(\tfrac{\sigma_a}{\sqrt{\time_a}}\right)\|\a\|.
\label{eq:anull}
\end{equation}
\ac{pscp} centering removes the persistent row-space leakage of $\a^{\mathrm{row}}$ but does not cancel $\a^{\mathrm{null}}$.
For the attack to remain below the defender's noise floor $\sigma_d$, the attacker must satisfy
\begin{equation}
\frac{\sigma_a}{\sqrt{\time_a}} \lesssim \frac{\sigma_d}{\|\a\|}.
\label{eq:asymbound}
\end{equation}
The defender's analogous subspace-estimation error is canceled by Proposition~\ref{prop:opleak}, while the attacker has no such cancellation.
Hence the attacker requires strictly more precise subspace estimation than the defender.
\end{corollary}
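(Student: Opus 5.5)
The plan is to mirror the Davis--Kahan argument used in the proof of Proposition~\ref{prop:opleak}, this time from the attacker's side. Let $\hat{\mathbf{P}}_r$ be the orthogonal projector onto the attacker's estimated row space, obtained from the top $(\dimx-1)$ left singular vectors of a noisy window of length $\time_a$ at noise level $\sigma_a$. Let $\mathbf{P}_r$ be the true projector onto $\mathcal{R}(\H)$, and let $\mathbf{P}_a = \mathbf{I}-\mathbf{P}_r$ be the projector onto $\mathcal{N}(\H^\top)$.

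\textbf{Step 1: subspace perturbation bound.} Assume the same spectral gap \eqref{eq:gap} as in Proposition~\ref{prop:opleak}. Applying the Davis--Kahan $\sin\Theta$ bound to the attacker's sample covariance gives
\[
\|\hat{\mathbf{P}}_r - \mathbf{P}_r\| = \mathcal{O}\!\left(\tfrac{\sigma_a}{\sqrt{\time_a}}\right),
\]
exactly as in \eqref{eq:dk-bound}.

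\textbf{Step 2: bound the null-space component.} Since $\a \in \mathcal{R}(\hat{\mathbf{P}}_r)$, we have $\a = \hat{\mathbf{P}}_r\a$. Therefore
\[
\a^{\mathrm{null}} = \mathbf{P}_a\a = (\mathbf{I}-\mathbf{P}_r)\hat{\mathbf{P}}_r\a = (\hat{\mathbf{P}}_r-\mathbf{P}_r)\hat{\mathbf{P}}_r\a,
\]
where the last equality uses $(\mathbf{I}-\hat{\mathbf{P}}_r)\hat{\mathbf{P}}_r = \mathbf{0}$. Taking norms, $\|\a^{\mathrm{null}}\| \le \|\hat{\mathbf{P}}_r-\mathbf{P}_r\|\,\|\a\|$, which combined with Step 1 gives \eqref{eq:anull}.

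\textbf{Step 3: effect of centering.} Apply the decomposition of \eqref{eq:centered} to the defender's projection of the attacked sample. The row-space part $\a^{\mathrm{row}}$ plays the same role as $\H\bar{\x}$ in Proposition~\ref{prop:opleak}: if it is persistent over the window, it is removed by subtracting $\bar{\a}$, and otherwise it leaks only at the defender's $\mathcal{O}(\sigma/\sqrt{\time})$ level. The null-space part $\a^{\mathrm{null}}$ lies in $\mathcal{N}(\H^\top)$, so $\mathbf{P}_a\a^{\mathrm{null}} = \a^{\mathrm{null}}$. Up to the defender's own $\mathcal{O}(\sigma_d/\sqrt{\time})$ projector error, $\U_a\U_a^{+}$ therefore passes it unchanged, and centering removes only its window mean. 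For stealth, the attacker needs $\|\a^{\mathrm{null}}\| \lesssim \sigma_d$. Substituting \eqref{eq:anull} gives \eqref{eq:asymbound}.

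\textbf{Step 4: the asymmetry.} The defender's estimation error enters only through terms multiplying $\H\bar{\x}$, and Proposition~\ref{prop:opleak} cancels these. The attacker's error enters additively through $\a^{\mathrm{null}}$, which the defender's projector does not suppress.

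The main obstacle is Step 3: making precise that $\a^{\mathrm{null}}$ is not itself cancelled by centering. If the attack is constant over the window, then $\a_t - \bar{\a} = \mathbf{0}$, and centering would remove it too. I would handle this by treating the attack as time-localized, or as nonstationary relative to the window, so that $\a_t-\bar{\a}\approx\a_t$. This assumption should be stated explicitly, and "strictly more precise" should be read as an order-of-magnitude comparison.
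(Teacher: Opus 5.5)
Your proposal matches the paper's proof essentially step for step. The paper writes $\|\mathbf{a}^{\mathrm{null}}\| = \|(\bar{\mathbf{P}}_a - \mathbf{P}_a^{\mathrm{att}})\mathbf{a}\|$ using $\mathbf{P}_a^{\mathrm{att}}\mathbf{a} = \mathbf{0}$, which is your Step~2 in complementary form because $\bar{\mathbf{P}}_a - \mathbf{P}_a^{\mathrm{att}} = \hat{\mathbf{P}}_r - \mathbf{P}_r$, then applies Davis--Kahan under the same gap assumption, and resolves the centering obstacle you flag as you propose: with the attack active on $k$ of $t$ samples the centered contribution is $(1-k/t)\,\mathbf{a}^{\mathrm{null}}$, and dropping the factor $(1-k/t)$ is your time-localized assumption made quantitative.
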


\begin{proof}
Let $\bar{\mathbf{P}}_a$ denote the true projector onto $\mathcal{N}(\H^{\top})$ and let $\mathbf{P}_a^{\mathrm{att}}$ denote the attacker's empirical counterpart.
By construction $\mathbf{P}_a^{\mathrm{att}}\a = \mathbf{0}$, so

\begin{align}
    \bigl\|\a^{\mathrm{null}}\bigr\| &= \bigl\|\bar{\mathbf{P}}_a\a\bigr\| 
    \\&= \bigl\|(\bar{\mathbf{P}}_a - \mathbf{P}_a^{\mathrm{att}})\a\bigr\| \leq \bigl\|\bar{\mathbf{P}}_a - \mathbf{P}_a^{\mathrm{att}}\bigr\|\,\|\a\|,
\end{align}

and Davis--Kahan gives $\|\bar{\mathbf{P}}_a - \mathbf{P}_a^{\mathrm{att}}\| = \mathcal{O}(\sigma_a/\sqrt{\time_a})$(under the same gap assumption as in \cref{prop:opleak}), yielding \eqref{eq:anull}.
For an attack persistent over $k$ of $\time$ samples, $\bar{\a}^{\mathrm{null}} \approx (k/\time)\,\a^{\mathrm{null}}$, so the centered residual contributes $(1 - k/\time)\,\a^{\mathrm{null}}$, which must remain below $\sigma_d$.
Dropping the $(1 - k/\time)$ factor yields \eqref{eq:asymbound}.
\end{proof}
}

The standardized projection is therefore:
\begin{equation}\label{eq:pseudo-test-norm2}
\N_s = \U_a \U_a^+ (\Z_n - \mathrm{mean}(\Z_n)).
\end{equation}
The corrected pseudo-null space components $\N_S=(\N_{s1},\ldots,\N_{s\time})$ are then evaluated via their magnitudes $\L_N=(\|\N_{s1}\|_2,\ldots,\|\N_{s\time}\|_2)$, with separation threshold $\mathcal{T}$ computed by the minimum cross-entropy criterion~\cite{li1993minimum,li1998iterative}:
\begin{equation}\label{eq:cross_entropy_threshold}
\mathcal{T} = \threLi(\L_N).
\end{equation}
If $\|\N_{si}\|_2 \geq \mathcal{T}$, the corresponding measurement is flagged as attacked.


\rev{\paragraph{Importance of Preprocessing}
By \Cref{prop:null_conservation}, \ac{pscp} preserves the decomposition $\z_i = \H\x_i + \noise_i$: the $\H\x_i$ ($\noise_i$) component 
remains within (outside) $\mathcal{H}_{\mathrm{DC}}$.
Projection via $\U_a \U_a^{+}$ therefore maintains noise-signal separation after preprocessing.
An attack that injects perturbations outside $\mathcal{H}_{\mathrm{DC}}$ produces anomalous 
projections in \cref{eq:pseudo-test-norm2}.
In contrast, per-feature standardization mixes the two components.}

\rev{
\paragraph{Computational Complexity} 
\ac{ping} decomposes into an offline estimation step and an online single-frame test.
Given a historical window $\Z\in\R^{\dimz\times \time}$, computing the \ac{svd} and the projector $\mathbf{P}_a = \U_a\U_a^{+}$ costs $\mathcal{O}(\dimz^2\time)$.
At each new frame $\z_i$, only $\mathcal{O}(\dimz(\dimz-\dimx+1))$ operations are required, hence the per-frame detection cost is independent of the window length $\time$.
As more measurements become available, the empirical estimate of the pseudo-null space 
$\U_a$ becomes increasingly accurate, and the offline step can be replaced by an incremental \ac{svd} \cite{brand2006fast} that progressively refines $\U_a$ with each incoming sample, at a per-update cost of 
$\mathcal{O}(\dimz \r + \r^{3})$.
A partial topology change (e.g., line switching or a single-element outage) which modifies only a few rows and columns of $\H$, is readily accommodated by the same incremental \ac{svd} per event.
A full \ac{svd} refresh is therefore needed only after significant topology reconfiguration, thereby making \ac{ping} suitable for real-time deployment.
}

\paragraph{Extension to \ac{ac} model.}

\rev{
The proposed \ac{ping} framework naturally extends to the \ac{ac} setting from a local measurement-manifold viewpoint. In the \ac{ac} model, the feasible measurement set is \(\mathcal{H}_{\mathrm{AC}}
=
\{\h(\x):\x\in\mathbb{R}^{\dimx}\},
\) which is generally a nonlinear manifold embedded in the measurement space \(\mathbb{R}^{\dimz}\). Around an operating point \(\x_0\), this manifold can be locally approximated by its tangent space, whose orientation is determined by the Jacobian \(\boldsymbol{J}(\x_0)\). Specifically,
\(
\h(\x_0+\Delta \x)
=
\h(\x_0)
+
\boldsymbol{J}(\x_0)\Delta \x
+
\mathcal{O}(\|\Delta \x\|_2^2).
\)
Thus, the accuracy of the local pseudo-null space approximation depends on the curvature of \(\mathcal{H}_{\mathrm{AC}}\) and the size of the operating-point variation within the analysis window. When the trajectory remains in a locally coherent operating region, the second-order term is small and the tangent-space approximation is reliable. When the trajectory spans strongly nonlinear or rapidly changing regions, the tangent space may rotate across samples, so a single \ac{svd}-derived subspace approximates a union of nearby local low-rank subspaces rather than one fixed flat subspace.
}

\rev{
This local low-rank structure is further reinforced by physical dependencies in \ac{ac} networks. In particular, the effective rank of \(\Z\) decreases when linear dependencies arise among currents and flows, especially along degree-2 series chains of injection-free buses. \Cref{lem:ac_cu} formalizes this property: all branch currents on a pure series chain are identical, reducing the measurement rank and strengthening the pseudo-null structure. Consequently, the same \ac{ping} principle applies to \ac{ac} measurements, while its subspace estimation accuracy depends on the local validity of the tangent-space approximation. This explains why \ac{ping} remains effective in standard IEEE \ac{ac} systems (see Section \ref{sec:results:ac}), while the resulting anomaly scores may exhibit higher variance than in the \ac{dc} case.
}

\begin{lemma}
\label{lem:ac_cu}
Under zero-injection and negligible-shunt assumptions, all branch currents in a pure series chain satisfy $I_{0,1}=I_{1,2}=\ldots=I_{l-1,l}$.
\end{lemma}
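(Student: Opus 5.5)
The plan is to apply Kirchhoff's current law node by node along the chain and then chain the resulting equalities by induction. First fix notation. A \emph{pure series chain} is a path of buses $0,1,\ldots,l$ such that each interior bus $k\in\{1,\ldots,l-1\}$ has degree two in the network graph, being incident only to branches $(k-1,k)$ and $(k,k+1)$. Each interior bus also carries zero net injection, $I_k^{\mathrm{inj}}=0$, meaning no load and no generation. Orient every branch current $I_{k-1,k}$ from bus $k-1$ toward bus $k$. Denote by $I_{k,k-1}$ the current leaving bus $k$ into branch $(k-1,k)$, measured at the bus-$k$ terminal.

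Next, write the nodal current balance at an interior bus $k$ in the AC setting, $\mathbf{I}=\mathbf{Y}\mathbf{V}$. The injected current equals the sum of the currents leaving through incident branches plus any shunt current $y_k^{\mathrm{sh}}V_k$. This gives
\begin{equation}
0 = I_k^{\mathrm{inj}} = I_{k,k-1} + I_{k,k+1} + y_k^{\mathrm{sh}} V_k .
\end{equation}
Using the $\pi$-model of each branch, the terminal currents satisfy $I_{k,k-1} = -I_{k-1,k} + (\text{line-charging terms})$. Under the negligible-shunt assumption, $y_k^{\mathrm{sh}}=0$ and the line-charging susceptances vanish. Then $I_{k,k-1}=-I_{k-1,k}$ holds exactly, since the series current is conserved through a pure series element. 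The nodal equation then reduces to $I_{k-1,k}=I_{k,k+1}$.

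Finally, apply this identity successively for $k=1,\ldots,l-1$; a trivial induction gives $I_{0,1}=I_{1,2}=\cdots=I_{l-1,l}$. I would add a remark that the equality holds for complex phasors, and hence also for the current magnitudes that appear as measurements. This is the source of the linear dependence among rows of $\Z$ invoked in the surrounding discussion.

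The only delicate step is the bookkeeping of branch shunts. Without the negligible-shunt assumption, the charging currents $\tfrac{1}{2}jb_{k-1,k}V_{k-1}$ and $\tfrac{1}{2}jb_{k-1,k}V_k$ break the equality $I_{k,k-1}=-I_{k-1,k}$. I would therefore state precisely that both the bus shunts and the $\pi$-model charging are dropped, and check that the same sign convention is used at both ends of every branch.
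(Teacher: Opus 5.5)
Your proposal is correct and is essentially the paper's argument: KCL at each zero-injection interior bus, antisymmetry $I_{k,k-1}=-I_{k-1,k}$ once shunts are dropped, and chaining $I_{k-1,k}=I_{k,k+1}$ along the path. Your separation of bus shunts from $\pi$-model line charging also makes explicit what the paper's orientation step $I_{k,k+1}=-I_{k+1,k}$ assumes without stating; the one caution concerns your closing remark, since the AC measurements in $\mathbf{Z}$ are power flows, for which the lemma gives only approximate (not exact) linear dependence, as the paper itself notes.
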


\begin{proof}
Applying \ac{kcl} at each internal node $i_k$ yields $I_{k-1,k}+I_{k+1,k}+I_{i_k}^{sh}=0$.  
With $I_{i_k}^{sh}=0$ and orientation $I_{k,k+1}=-I_{k+1,k}$, we have $I_{k,k+1}=I_{k-1,k}$.
\end{proof}
\rev{Note that this establishes exact linear dependence only for branch currents and that the broader low-rank structure observed in AC measurements (\cref{sec:results:ac}) reflects approximate dependencies that we characterize empirically rather than analytically.}
Because these current series chains are common in both transmission and distribution grids, pseudo-null dependencies appear naturally. 
\rev{This supports the applicability of \ac{ping} for nonlinear \ac{ac} environments.}
\rev{
\paragraph{Physical Interpretation of Null-Space Conservation}
\label{sec:physical_interpretation}
The Null-Space Conservation Proposition guarantees that \ac{pscp} preserves $\mathcal{H}_{\mathrm{DC}}$.
To clarify what this means physically, we identify the invariants encoded by $\mathcal{H}_{\mathrm{DC}}$ and contrast them with operational events that do alter it.
\\
In the \ac{dc} model, each row of $\H$ is a linear functional of the bus-angle vector $\x$ expressing a branch flow or bus injection through the linearized power-flow equations. Hence $\mathcal{H}_{\mathrm{DC}}$ is parameterized by  the following physical information: the network {topology}, the branch {reactances} $\{x_{ij}\}$, and the {linearized active power-balance equations}.
These are the invariants explicitly preserved by \ac{pscp}.
\ac{kcl} is implicitly preserved: \ac{kcl} is embedded in every injection row $P_i = \sum_{j \in \mathcal{N}(i)} (\theta_i - \theta_j)/x_{ij}$.
In the \ac{ac} setting, \Cref{lem:ac_cu} strengthens this picture: on zero-injection degree-2 chains, \ac{kcl} becomes an exact linear dependence, so \ac{kcl} is preserved explicitly along such chains.
\\
The same correspondence clarifies which events do change $\mathcal{H}_{\mathrm{DC}}$. 
Consider a 3-bus \ac{dc} system with branches $1$--$2$, $1$--$3$, $2$--$3$ and reactances $x_{12} = x_{13} =x_{23} = 0.1$~p.u. 
With $P_{ij} = (\theta_i - \theta_j)/x_{ij}$, the baseline measurement matrix is
\begin{equation}
\H = \frac{1}{0.1}
\begin{bmatrix}
1 & -1 & 0 \\
1 & 0 & -1 \\
0 & 1 & -1
\end{bmatrix},
\label{eq:H_baseline}
\end{equation}
whose range encodes the loop relation $P_{12} - P_{13} + P_{23} = 0$.
If the breaker on line $2$--$3$ opens, the corresponding row vanishes, the loop $1\!-\!2\!-\!3\!-\!1$ is removed, and
\begin{equation}
\H_{\text{open}} = \frac{1}{0.1}
\begin{bmatrix}
1 & -1 & 0 \\
1 & 0 & -1
\end{bmatrix},
\end{equation}
so $\mathrm{Range}(\H_{\text{open}}) \neq \mathcal{H}_{\mathrm{DC}}$.
Alternatively, keeping the topology fixed but updating $x_{12}$ from $0.1$ to $0.05$~p.u.\ yields $\widetilde{\H}$ with rows $[20,-20,0]$, $[10,0,-10]$, $[0,10,-10]$; the rank is unchanged but the loop relation becomes $0.5\,P_{12} - P_{13} + P_{23} = 0$, so $\mathrm{Range}(\widetilde{\H}) \neq \mathcal{H}_{\mathrm{DC}}$.
Topology and reactance changes thus alter $\mathcal{H}_{\mathrm{DC}}$ and must be re-identified, whereas \ac{pscp} keeps transformed measurements inside $\mathcal{H}_{\mathrm{DC}}$ and therefore preserves all of these invariants exactly.
}

\section{\label{sec:evaluation}Experiments}

The experiments validate that the proposed \ac{ping} allows accurate detection of low-magnitude, geometry-preserving \acp{fdia} by utilizing the misalignment between the physical null space implied by the system model and the measurement-derived pseudo-null subspace estimated from data.  
In what follows, we demonstrate how this implicit null space alignment manifests empirically across different grid sizes, noise conditions, and observability levels.

\subsection{Setup} 
We evaluate our approach on standard transmission grid test cases to ensure robustness and scalability. Specifically, we conduct experiments on the IEEE $14$-bus, $30$-bus, $57$-bus, and $118$-bus systems, which are widely used in power system research due to their varying network sizes and complexities. These test systems form a comprehensive benchmark, ranging from sparse small grids (e.g., $14$-bus) to highly interconnected large networks (e.g., $118$-bus). In each system, we generate time-series measurement data by solving \ac{dc} power flow equations using MATPOWER \cite{zimmerman2010matpower}. 
To enhance realism, our simulations incorporate real-world power profiles from Duquesne Light Company, which is located in Pittsburgh, Pennsylvania, USA. 
Variability is introduced by scaling the load and generation profiles with randomly selected loading parameters. 
Additionally, Gaussian white noise with a standard deviation of 0.005 p.u. \cite{xiao2024privacy} is injected into the measurements to account for sensor noise.

From the outset, unless stated otherwise, our \ac{fdia} scheme assumes that all \ac{pmu} devices are under attack, with the exception of Section~\ref{sec:partial_obs_analysis} where only a subset of measurements is accessible for both attacker and defender, and thoroughly analyze the feasibility of our \ac{fdia} scheme and its detection in each of these cases. \rev{By default, the pseudo-null space threshold is set to \(\tau=0.01\).} For the small random fluctuation in the state-perturbation vector, we set \(\sigma_c=10^{-1}\|c_n\|_2/\sqrt{n-1}\), so that the expected fluctuation energy remains approximately one order of magnitude smaller than the deterministic pseudo-null component.
In addition, it is important to note that our \ac{fdia} scheme alters the power flow measurements within the regime of the pseudo-null space, ensuring that the measurements do not significantly deviate, thereby retaining attack stealth and circumventing time synchronization effects.

\subsection{Pseudo-Null Space \ac{fdia}}
\label{sec:results:fdia}
We begin the demonstration of the main results by depicting how a pseudo-null space attack alters the physical states while keeping the measurement-space geometry nearly unchanged, a misalignment that \ac{ping} should detect.
\Cref{fig:detection_comparison2} illustrates the pseudo-null space \ac{fdia} effects within the time-series measurement data. 
During the attack period, the estimated states deviate by more than 5\%, confirming the attack's effectiveness and impact. Meanwhile, the residual error remains consistently low (approximately $5
\times 10^{-5}$), demonstrating that the attack successfully bypasses the \ac{bdd}. 

\begin{figure}[t]
    \centering
    \includegraphics[width=1\linewidth]{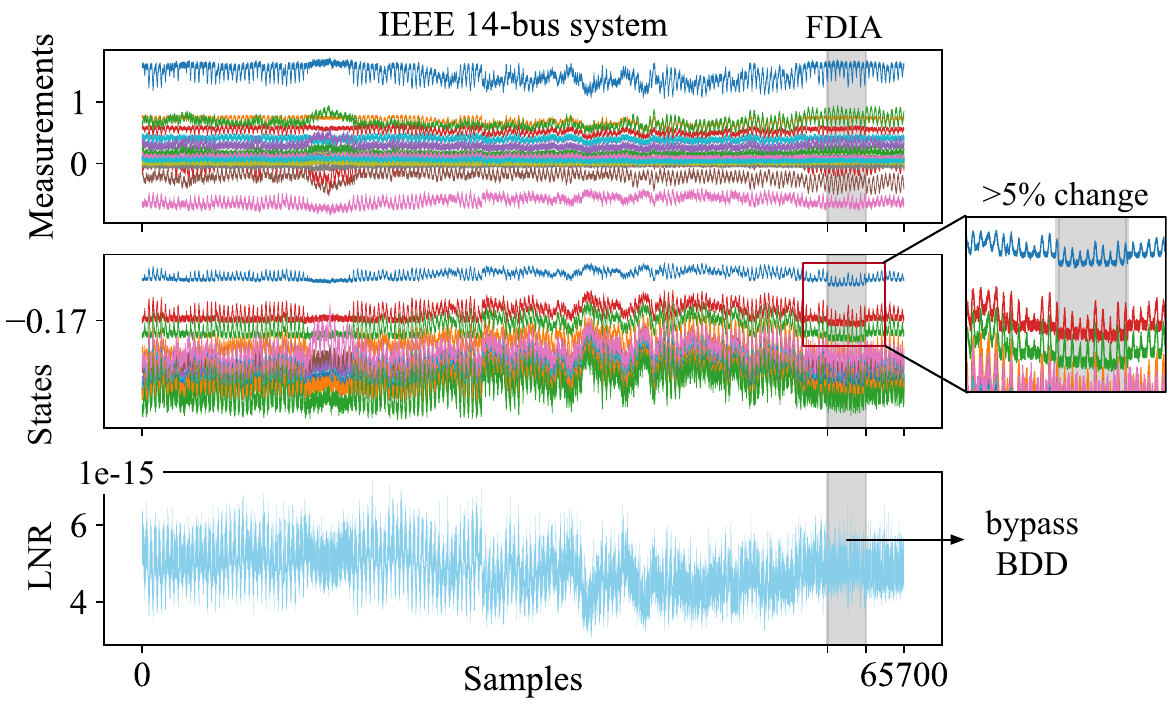}
    \vskip -0.1in
    \caption{The proposed FDIA on measurements (upper plot), induced impact on state (middle plot), and corresponding \ac{bdd} result (lower plot). Gray background marks the attack period. }
    \label{fig:detection_comparison2}
\end{figure}

\subsection{Comparing \ac{ping} with \acs{ml}-based Detectors}
\label{sec:results:detection}

Given the pseudo-null space \ac{fdia}, we further use \Cref{fig:detection_comparison} to compare our proposed detection mechanism to simple and advanced \ac{ml}-based \ac{fdia} detectors, including an Isolation Forest detector, an \ac{lstm}-based detector~\cite{deepSignalAnomalyDetector,MATLAB:R2024b}, a \ac{xtm}~\cite{baul2023xtm} and an \ac{ae}~\cite{wang2020detection}.
In Fig.~\ref{fig:detection_comparison}, the first observation is that the Isolation Forest detector completely fails across all grid scales, highlighting the effectiveness of the investigated attack. 
More advanced \ac{ml}-based detectors (\ac{lstm} and \ac{xtm}) perform comparably to \ac{ping} detecting the stealthy \ac{fdia} in smaller grids (i.e., the IEEE 14-bus system). 
However, in larger grids, both the \ac{lstm} and \ac{xtm} detectors struggle to localize the attack period and resulting in false alarms during normal operation after the attack. 
\rev{The \ac{ae} achieves better performance on larger grids, yet produces no detectable response on the IEEE 14-bus system.
We attribute the degradation of \ac{lstm} and \ac{xtm} to increased difficulty of training \ac{ml}-based models to recognize null space attacks as the system dimension (and thus the null space rank) grows.
}
\\
Unlike statistical data-driven detectors, \ac{ping} explicitly measures the magnitude of the measurement-derived pseudo-null component, allowing it to track subtle misalignment between the empirical and physical invariants. 
Consequently, it maintains reliable detection performance even as the system dimension and null space rank increase.

\begin{figure}[t]
    \centering
    \includegraphics[width=1\linewidth]{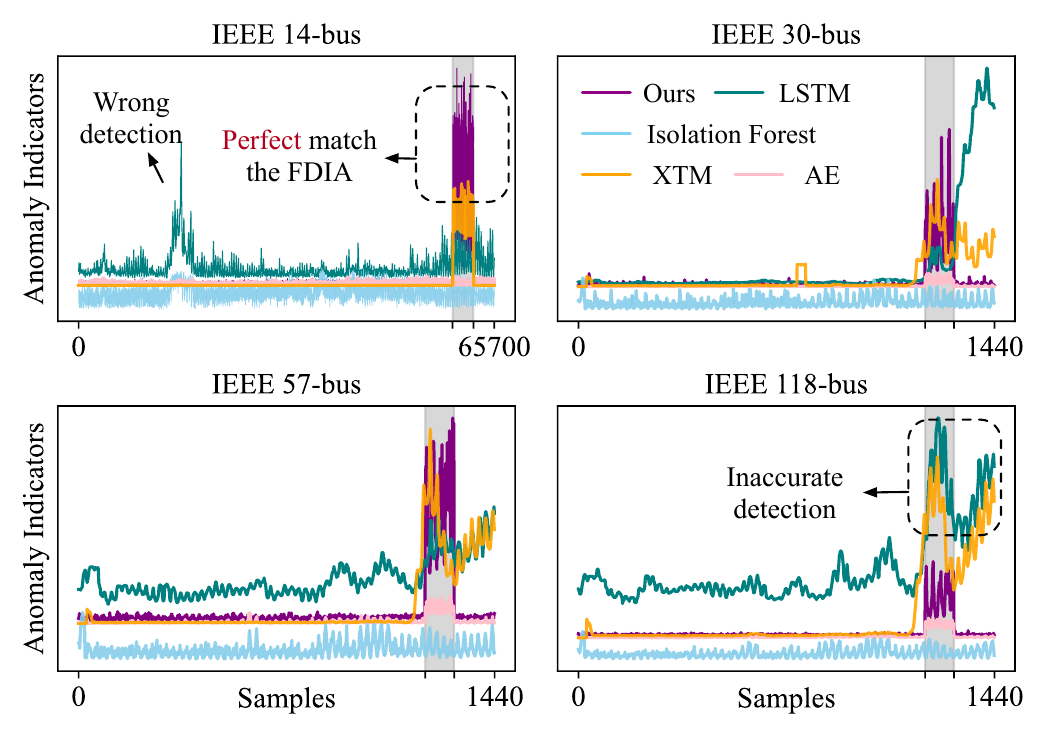}
    \vskip -0.1in
    \caption{The comparison of \ac{ping} and \ac{ml}-based detectors on the proposed FDIA in IEEE $14$, $30$, $57$, $118$-bus systems. The vertical gray-shaded region denotes the FDIA period.}
    \label{fig:detection_comparison}
\end{figure}

\subsection{Sensitivity to Noise}
\label{sec:sensitivity_to_noise}

To evaluate the robustness of the \ac{ping} framework under varying sensor noise levels, we test different \acp{std} of the Gaussian noise $\noise$, as shown in \Cref{fig:sensitivity_to_noise}. The results indicate that the proposed \ac{ping} detection method remains effective up to approximately $\text{std}\sim0.01$ p.u., which aligns with typical \ac{pmu} noise levels. 
With noise beyond this level, attack-induced anomalies become indistinguishable from normal operational variations, diminishing the ability of all detection methods, including \ac{ping}, to reliably identify \acp{fdia}. 
This analysis quantifies the noise tolerance of our approach and offers insights into its practical applicability across different sensing conditions.

\begin{figure}[t]
    \centering
    \includegraphics[width=1.0\linewidth]{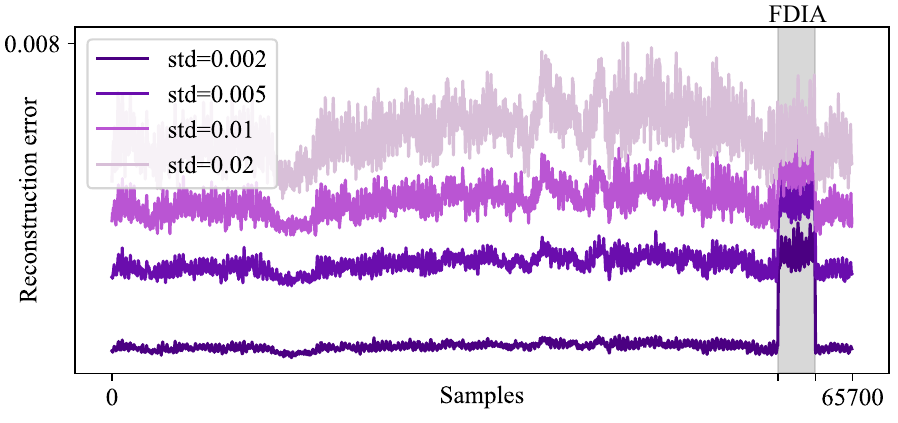}
    \caption{Sensitivity of \ac{ping} to the \acf{std} of the measurement noise.}
    \label{fig:sensitivity_to_noise}
\end{figure}

\subsection{\rev{Sensitivity to the Pseudo-null space Threshold $\tau$}}

\rev{
The tolerance threshold \(\tau\) in Eq. (\ref{eq:pseudo-null-def}) controls how closely a selected perturbation direction satisfies \(Hc\approx 0\): smaller values yield more stealthy near-null directions, while larger values admit a broader perturbation subspace.
To evaluate the sensitivity of PCNSA to \(\tau\), we vary \(\tau\) over \(\{10^{-2},5\times10^{-3},2\times10^{-3},10^{-3}\}\) and evaluate the resulting anomaly indicators across test systems. The results are shown in Fig.~\ref{fig:sensitivity_tau}. Across all four systems, the anomaly indicators remain clearly elevated during the FDIA interval for the tested values of \(\tau\), indicating that PCNSA is not overly sensitive to a single threshold choice. Although the absolute magnitude of the anomaly indicator varies with \(\tau\), the attack period remains distinguishable from the normal operating period.
}

\rev{
Several trends can be observed. 
Smaller values of \(\tau\) generally produce attack directions that are more tightly aligned with the pseudo-null space, leading to weaker measurement-space signatures and lower anomaly magnitudes. Larger values of \(\tau\) permit less precise null directions, amplifying the attack component in the measurement space and often yielding stronger anomaly indicators. 
The relative robustness of the detection pattern is consistent across different system sizes, including the IEEE 118-bus system. 
These observations confirm that \(\tau\) mainly controls a stealthiness--impact tradeoff in the attack construction, while the proposed PCNSA detector remains effective over a practical range of pseudo-null space thresholds.
}

\begin{figure}[h]
    \centering
    \vskip -0.1in
    \includegraphics[width=1\linewidth]{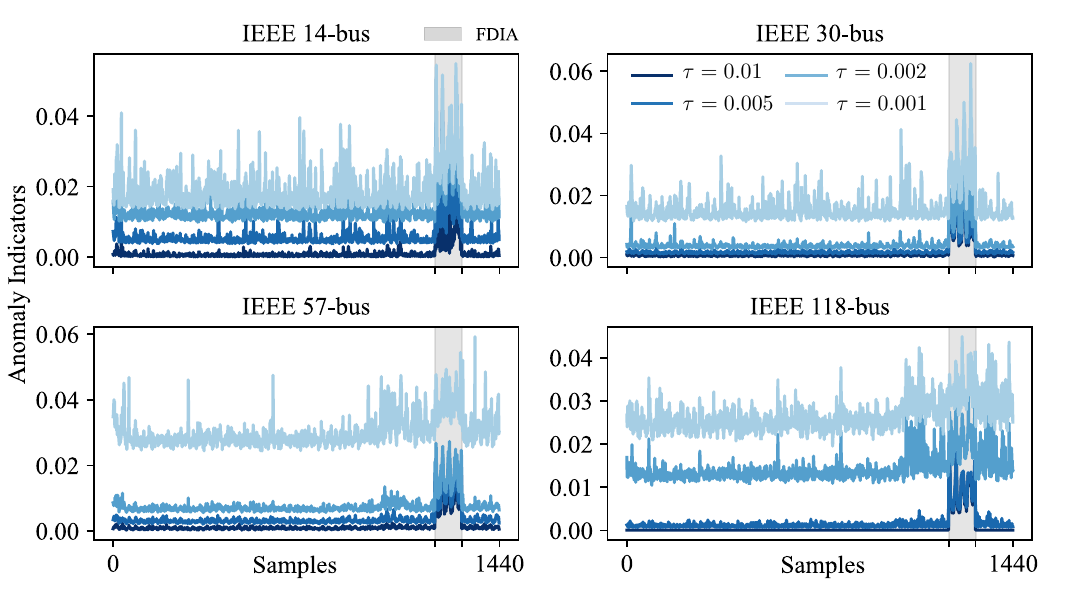}
    \vskip -0.1in
    \caption{\rev{Sensitivity analysis of PCNSA with respect to the pseudo-null space threshold \(\tau\) on IEEE 14-, 30-, 57-, and 118-bus systems. The gray shaded regions denote the FDIA periods. The anomaly indicators remain distinguishable during the attack interval across different values of \(\tau\), showing that PCNSA is robust to the pseudo-null space threshold selection.}}
    \label{fig:sensitivity_tau}
    \vskip -0.1in
\end{figure}




\subsection{Ablation Study}
\begin{figure*}[!htbp]
\includegraphics[width=1.0\linewidth]{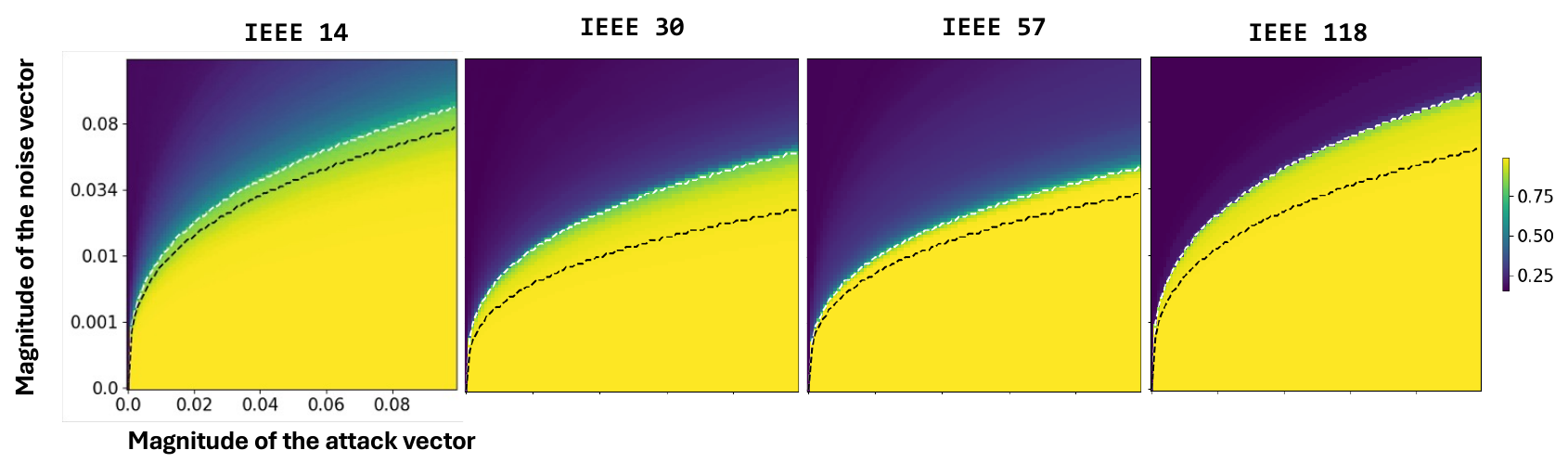}
\caption{Ablation analyses of the IEEE 14, 30, 57, 118-bus network systems, where the $F_1$ detection scores in the stealth-noise plane with data preprocessing (colored heat maps) and the $F_1$=0.8 curves with \ac{pscp} (dashed white) and without \ac{pscp} (dashed black) are shown for comparison.}
\label{fig:ablation_all_previous}
\end{figure*}
To demonstrate the effectiveness of the proposed \ac{ping}, we perform an ablation analysis in terms of prediction accuracy, i.e., we compare the results with and without data pre-processing in the presence of various stealth-noise combinations. 
\Cref{fig:ablation_all_previous} depicts \ac{ping} accuracy heat maps where the x/y axes represent the magnitude of the attack/noise vectors, respectively.
The color indicates the accuracy, which is computed using the $F_1$ score.
In addition, we show the $F_1=0.8$ curves with and without \ac{pscp}.
We observe that while our proposed detection method effectively covers a significant portion of the parameter space by providing precise detection of the attack period, there is still room for improvement, particularly at high noise levels where the attack is only partially detected or completely missed.
Conversely, attack evasion at the small attack vector region is much less of a concern, as smaller attack vectors are associated with negligible attack impact.
Finally, our proposed method achieves better detection coverage with \ac{ping} across large-scale systems, verifying the scalability of our approach.

\subsection{Evaluation of the Attack Impact}
\begin{figure}[h]
    \centering
    \includegraphics[width=0.8\linewidth]{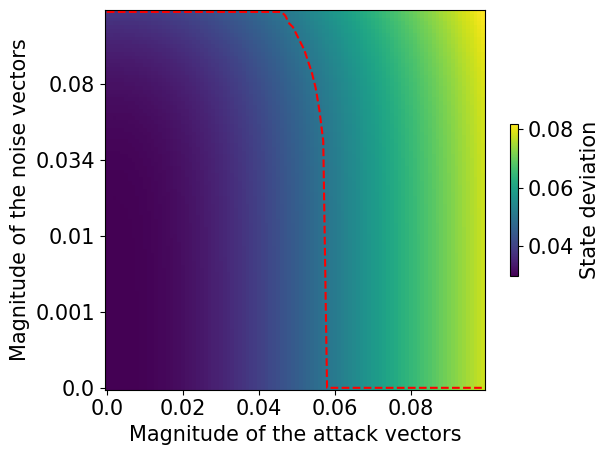}
    \caption{The impact of the studied \ac{fdia} on the state in terms of averaged state deviation (colored heat map) in the stealth-noise plane, for the IEEE 14 bus network system, where the 5\% state deviation (dashed red line) is shown for convenience.}
    \label{fig:state_impact_heatmap}
\end{figure}
\label{sec:state_level_analysis}
\Cref{fig:state_impact_heatmap} presents the attack impact on the state in the IEEE 14-bus system, quantified by the average state deviation percentage. 
The results indicate that the attack vector magnitude primarily determines the extent of state deviation, whereas measurement noise indirectly influences the impact by perturbing state estimation. 
Notably, by comparing \Cref{fig:ablation_all_previous,fig:state_impact_heatmap}, we observe that an attacker can induce a 5\% state deviation (marked by the red dashed line) while remaining undetected for large noise values ($\gtrsim 0.08$ p.u).

\subsection{Partial Observability Analysis}
\label{sec:partial_obs_analysis}
Next we analyze the \ac{fdia} impact and its detection by \ac{ping} while assuming that data from some meters is missing, e.g., due to communication disturbance. 

\paragraph{State deviation} \Cref{fig:IEEE14partialState} depicts the induced \ac{fdia} state deviation versus the number of missing meters for the IEEE 14 bus system. 
We see that the impact on the state is maximal when the number of observed measurements is $\sim$10 (50\%), underscoring the importance of effective \ac{fdia} detection under partial observability. 

\paragraph{Detection under partial observability}  
In \Cref{fig:IEEE14-118partial}, we plot the detection $F_1$ score as a function of the number of missing meters for the IEEE 14 (\Cref{fig:IEEE14-118partial}(a)) \rev{and 118 (\Cref{fig:IEEE14-118partial}(b))} bus systems.
In the IEEE 14 bus system, we see negligible $F_1$ degradation ($<0.2$) when missing fewer than six sensors, and the average $F_1$ is slightly below 0.71 when missing 10 sensors. 
\rev{In the IEEE 118 bus system, we see negligible $F_1$ degradation ($<0.2$) when missing fewer than 135 sensors.
This empirically demonstrates that larger grids exhibit higher tolerance to missing data, as larger systems often possess more redundant row structure, leading to increased measurement redundancy and improved rank stability.}

\begin{figure}
    \centering
    \vskip -0.1in
    \includegraphics[width=0.99\linewidth]{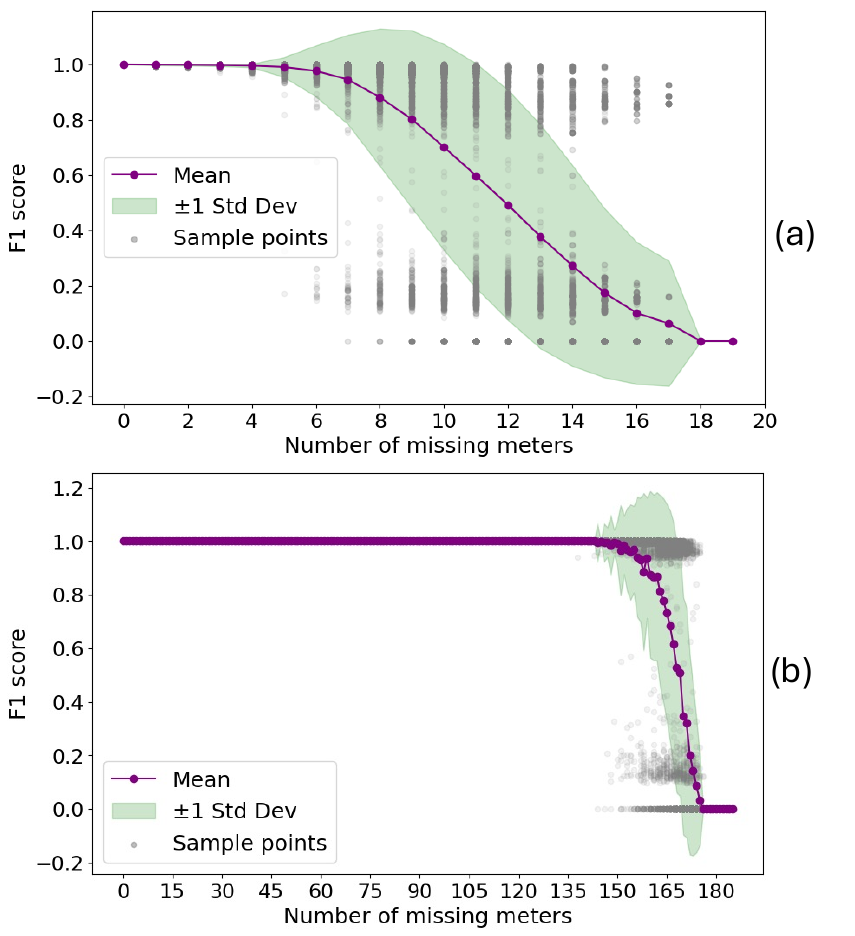}
    \vskip -0.1in
    \caption{\ac{ping} detection accuracy ($F_1$ score) of \ac{fdia} under partial observability conditions, i.e., as a function of the number of missed meters, in the IEEE-14 (a) and -118 (b) bus systems. The noise and full observability state deviation are set to 1e-2 and 0.05, respectively.}
    \label{fig:IEEE14-118partial}
    \vskip -0.1in
\end{figure}

\begin{figure}
    \centering
    \vskip -0.1in
    \includegraphics[width=0.99\linewidth]{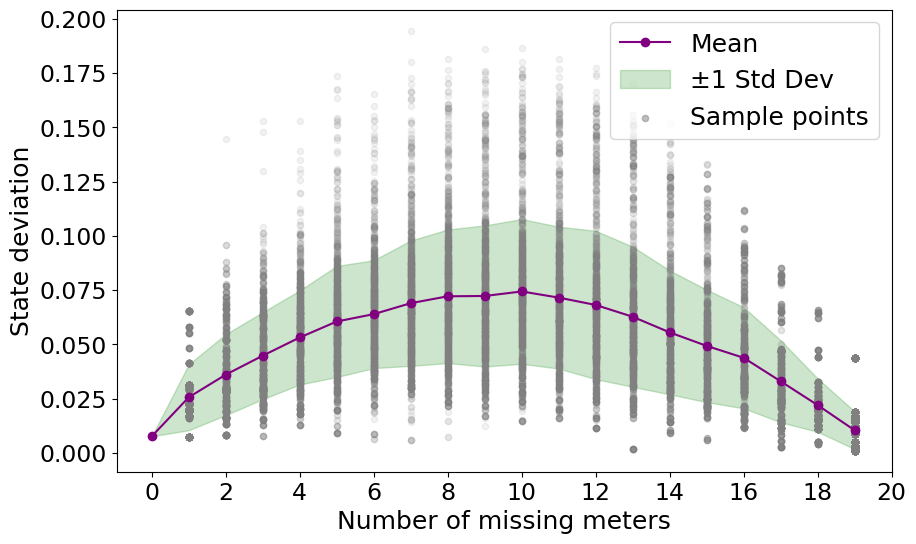}
    \vskip -0.1in
    \caption{State deviation of \ac{fdia} under partial observability conditions, i.e., as a function of the number of missed meters, in the IEEE-14 bus system. The noise and full observability state deviation are set to 1e-2 and 0.05, respectively.}
    \label{fig:IEEE14partialState}
    \vskip -0.1in
\end{figure}


\subsection{Extension to \ac{ac} Model}
\label{sec:results:ac}
To assess the generalizability of our framework beyond linear \ac{dc} model, we evaluate both \ac{fdia} and \ac{ping} in the \ac{ac} model without simplifying linearity assumptions. 
In this setting, $\z$ includes both active and reactive power flow measurements based on the \ac{ac} power flow equations. 
We compare our detection results to the \ac{ml}-based baselines in \Cref{fig:detection_comparison_AC}. 
Similar to the \ac{dc} setting, \ac{ml}-based anomaly detectors fail to reliably detect the attack. 
Specifically, the Isolation Forest produces no significant response, while the \ac{lstm} and \ac{xtm} detectors exhibit erratic behavior, with inconsistent localization and high false positive rates.
\rev{In contrast to the \ac{dc} setting, where the \ac{ae} achieves strong performance on larger grids, its accuracy degrades across all grid sizes under the \ac{ac} model.}
These results confirm the vulnerability of \ac{ml}-based anomaly detectors to pseudo-null space \acp{fdia}. 
Since such models operate purely on statistical patterns without leveraging the system’s physical structure, their performance is influenced by the consistency of the observed measurement trajectory. 
By locally adjusting $\boldsymbol{J}$ (see \Cref{sec:attack:ac}) to maintain the attack along the manifold, the adversary ensures that the perturbation remains statistically consistent with normal behavior, further underscoring the advantage of geometry-aware detectors like \ac{ping}.

\begin{figure}[t]
    \centering
    \includegraphics[width=1\linewidth]{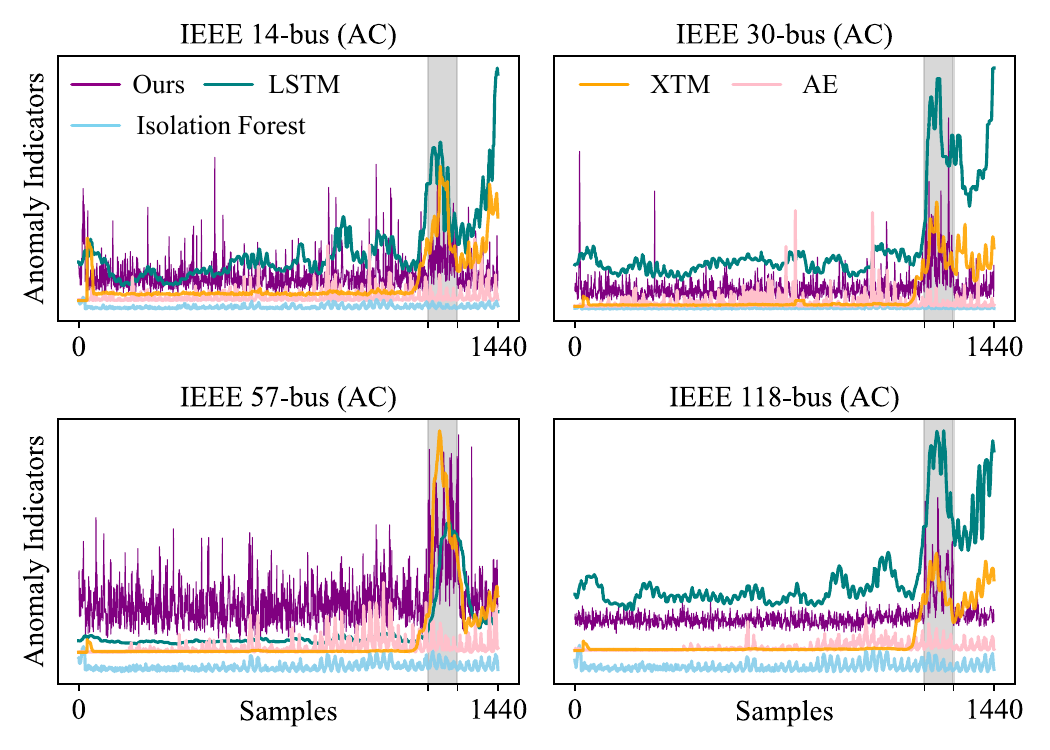}
    \caption{The comparison of \ac{ping} and \ac{ml}-based detectors on the proposed FDIA in IEEE $14$, $30$, $57$, $118$-bus systems under the \ac{ac} model.}
    \label{fig:detection_comparison_AC}
\end{figure}

In contrast, our \ac{ping} framework remains robust and consistently detects the attack period across the \ac{ac} model.
The attack interval remains clearly identifiable. 
Unlike in the \ac{dc} model (\Cref{fig:detection_comparison}), here the larger grids show more vivid anomalies.  

The performance clearly degrades compared to the \ac{dc} case with overall higher variance of the anomaly scores across all samples. 
This degradation arises from a key structural difference between \ac{dc} and \ac{ac} model: 
in the \ac{dc} model, the measurements lie on a low-dimensional flat subspace, resulting in a sharp drop in singular values after the true rank (see \Cref{fig:singularvalues} left). 
In the \ac{ac} model, series chains of degree-2 buses without injections enforce exact current equalities along the entire chain by \ac{kcl}.
Consequently, the time series of branch currents is perfectly collinear; the corresponding power flows are near-collinear, being simple modulations of the same chain current by slowly varying bus voltages.
These chain-level dependencies compress the intrinsic dimensionality of the measurement set beyond what is seen in \ac{dc}.

Globally, \ac{ac} measurements trace a nonlinear manifold $\z=\h(\x)$; as operating conditions drift, the local tangent $\boldsymbol{J}(\x)$ rotates, so data occupy a union of nearby low-rank subspaces rather than a single flat subspace.
This yields a lower effective rank but a gradual singular-value decay rather than a sharp collapse (see \Cref{fig:singularvalues} right).
We find that setting the \ac{svd} threshold $\tau$ to values between $10^{-1}$ and $10^{-2}$ yields robust and consistent performance across the \ac{ac} model.

\begin{figure}[h]
    \centering
    \includegraphics[width=1\linewidth]{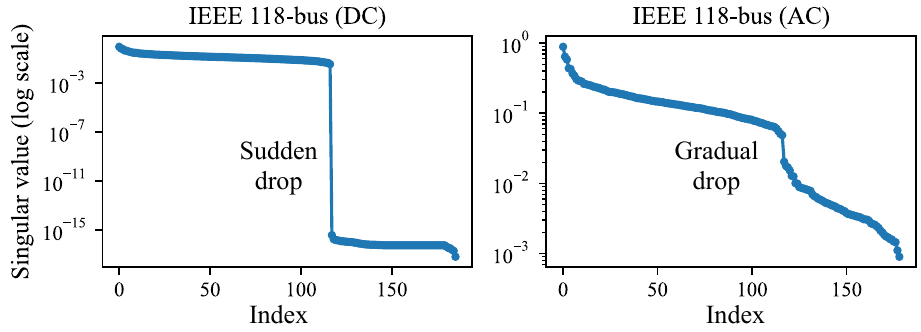}
    \caption{The singular values in the \ac{svd} process of power measurements in IEEE 118-bus system under the \ac{dc} (left) and \ac{ac} (right) models.}
    \label{fig:singularvalues}
\end{figure}

This consistency across nonlinear \ac{ac} manifolds further supports the central hypothesis of \ac{ping}: 
\textbf{stealth attacks can be revealed through deviations between the physically constrained null space and its empirical counterpart, even when the underlying Jacobian continuously evolves.}

To examine how zero-injection chains affect the effective rank of the measurement space, we isolated all degree-2 chains in the IEEE 118-bus system.
For each chain, we computed the least singular value $\sigma_{min}$ of its local measurement submatrix (formed from power-flow measurements on the two connected branches), and the average power-flow ratio, defined as:
\begin{equation}
 \bar{r}_{pf}=\frac{1}{\time}\sum_{i=1}^t\frac{|P_{from}(i)|}{|P_{to}(i)|},   
\end{equation}
indicating how balanced the two ends of the chain are over time.
As shown in Fig. \ref{fig:chain_singular}, most chains exhibit moderate singular values $10^{-3}<\sigma_{min}<1$ reflecting independent flow behavior.
However, a small cluster near $\bar{r}_{pf}\approx1$ has extremely small singular values $\sigma_{min}\approx10^{-6}$ marked by the red triangles.
These correspond precisely to zero-injection chains.
\rev{This empirical evidence is consistent with such chains that create low-rank dependencies in the \ac{ac} measurement manifold, explaining why the effective rank of \ac{ac} data is lower than that of the \ac{dc} model.}

\begin{figure}
    \centering
    \includegraphics[width=0.9\linewidth]{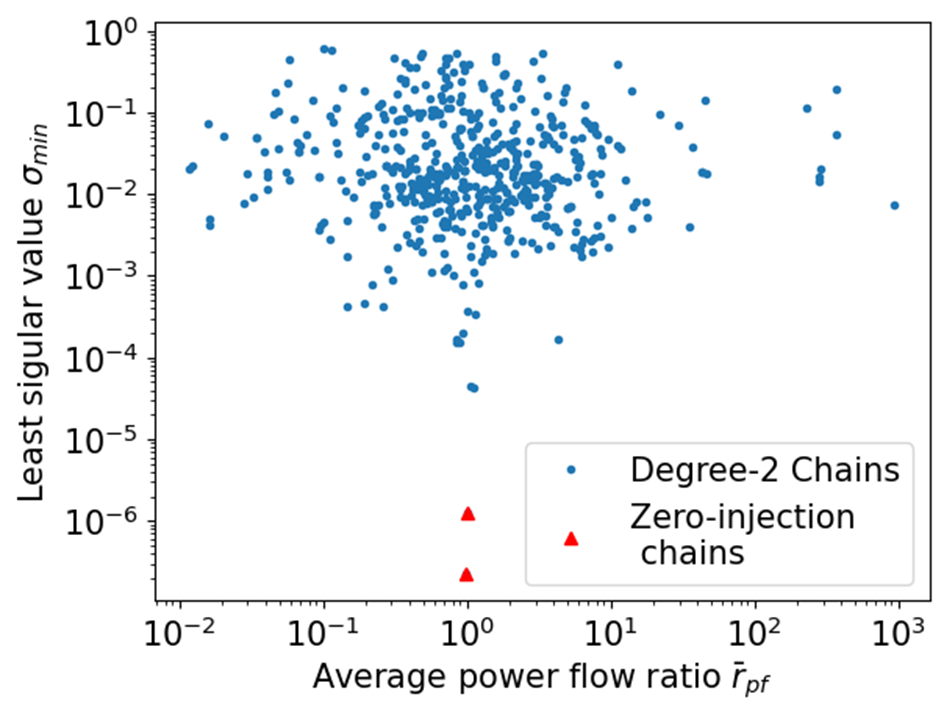}
    \caption{Relationship between the least singular value and the average power-flow ratio for all degree-2 chains in the IEEE 118-bus system under the \ac{ac} model.}
    \label{fig:chain_singular}
\end{figure}

This adjustment allows the method to effectively separate the underlying structure of the measurement data from the perturbed subspace induced by the attack. 
\rev{These results empirically support the geometric foundation of \ac{ping}, which extends effectively to the \ac{ac} model,} maintaining high detection accuracy and low false positive rates despite manifold curvature and the absence of exact linear structure, and outperforming both classical and advanced \ac{ml}-based detectors.

\rev{
\subsection{Threshold Stability Analysis}
\label{sec:threshold-stability}
The threshold $\mathcal{T}$ in \cref{eq:cross_entropy_threshold} is obtained from the empirical histogram of $\|\N_{s,i}\|_2$ via minimum cross-entropy and depends on the noise level $\sigma$ and window length $\time$. 
To assess real-time deployability, we run Monte Carlo sweeps ($N_{MC}=500$) on the IEEE 14- and 118-bus systems.
The trajectory $\Z$ is noised with i.i.d.\ $\mathcal{N}(\mathbf{0},\sigma^2 \mathbf{I})$ draws.
We perform 2 experiments: 1) we vary $\sigma\in[10^{-3},10^{-1}]$~p.u.\ with fixed $\time=1440$ (\Cref{fig:threshold-stability}(a)), and 2) we vary $ \time\in[60,1440]$ with fixed $\sigma=10^{-2}$~p.u (\Cref{fig:threshold-stability}(b)), where each window of length $t$ uses proportional sampling (proportion of attacked samples preserved).
\\
\Cref{fig:threshold-stability}(a) is consistent with the structure of $\|\N_{s,i}\|_2$:
$\mathcal{T}$ saturates at a system-dependent level governed by the bias term $-\U_a\U_a^{+}\sum\a/\time$ when $\sigma\lesssim10^{-2}$, and scales linearly with $\sigma$ above this crossover. 
Notably, both systems share this scaling law, with a relative spread of $<5\%$ across all $\sigma$.
The saturation level for the 118-bus system is approximately five times lower, as the larger null-space dimension averages out per-channel noise.
Fig.~\ref{fig:threshold-stability}(b) reports the \ac{cv}, defined as $\mathrm{CV}(\mathcal{T})=\mathrm{std}(\mathcal{T})/\mathbb{E}[\mathcal{T}]$.
In particular, at $\time=1440$, $\mathrm{CV}(\mathcal{T})\approx2\%$ (IEEE 14) and $<1\%$ (IEEE 118), while at $t=240$, $\mathrm{CV}(\mathcal{T})<7\%$ for both systems. 
The threshold is thus well behaved across $(\sigma,\time)$, and a window size estimate with $\time\geq240$ supports real time deployment.
\begin{figure}[t]
\centering
\includegraphics[width=0.7\linewidth]{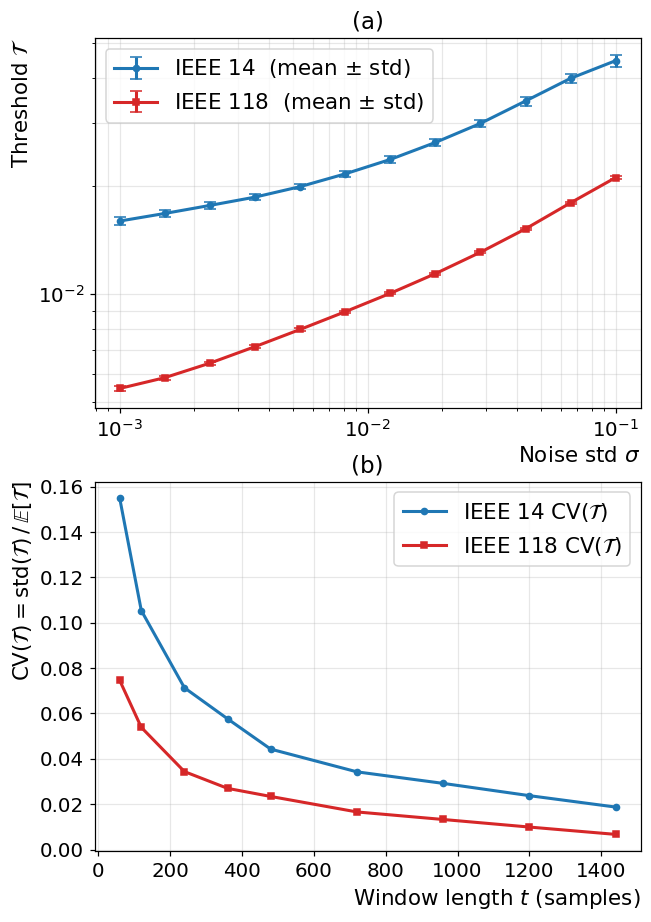}
\caption{Stability of $\mathcal{T}$ on IEEE 14 and 118 ($N_{MC}=500$). (a) $\mathcal{T}$ vs.\ $\sigma$ at fixed
$t=1440$ (error bars: $\pm 1$ std).
(b) $\mathrm{CV}(\mathcal{T})$ vs.\ $t$ at fixed
$\sigma=10^{-2}$~p.u.; with IEEE 118 more constrained due to its higher-dimensional null space.}
\label{fig:threshold-stability}
\end{figure}
}
\section{Conclusion}
\label{sec:conclusion}

This paper presented the \acf{ping} framework for detecting low-magnitude \acfp{fdia} that remain hidden within measurement noise yet distort state estimation. 
\rev{By preserving the physical null space structure using a preprocessing step formalized via null space 
conservation, \ac{ping} identifies small but physically inconsistent perturbations that are overlooked by 
conventional detectors.}

Such low-magnitude attacks can induce significant physical consequences such as a 5\% bias in state estimation as demonstrated in \Cref{sec:results:fdia}. 
Such bias may yield over 10\% voltage-sensitive load mismatch, degrading reactive power balance, frequency regulation~\cite{frequency_control_attack}, and economic dispatch~\cite{xie2011integrity}.  
These findings highlight the need for a detection mechanism that operates without assuming fixed or fully known measurement-space models.
The proposed \acf{pscp} ensures normalization in the physical coordinate frame, improving robustness against heterogeneous \ac{pmu} scaling and numerical drift. The numerical results demonstrate that enforcing physical data alignment offers an effective and scalable defense against stealthy \acp{fdia} in modern power grids. 
As a result, the proposed approach creates a conceptual shift from detecting statistical deviation to identifying physical misalignment, thus linking data-driven analysis with the underlying system physics in a unified manner.

\bibliographystyle{IEEEtran}
\bibliography{mybib, mybib2}

\appendices
\section{Abbreviations}
\begin{acronym}[PCNSA]
\acro{ac}[AC]{alternating current}
\acro{ai}[AI]{artificial intelligence}
\acro{bdd}[BDD]{bad data detector} 
\acro{cnn}[CNN]{convolutional neural network}
\acro{cv}[CV]{coefficient of variation}
\acro{cps}[CPS]{cyber-physical system} 
\acro{dc}[DC]{direct current}
\acro{dl}[DL]{deep learning}
\acro{edr}[EDR]{enforced dynamic response} 
\acro{fdia}[FDIA]{false data injection attack}
\acro{fft}[FFT]{fast Fourier transform}
\acro{gft}[GFT]{graph Fourier transform}
\acro{gnn}[GNN]{graph neural network}
\acro{gsp}[GSP]{graph signal processing}
\acro{imf}[IMF]{intrinsic mode function}
\acro{kcl}[KCL]{Kirchhoff’s current law}
\acro{knn}[KNN]{k-nearest neighbors}
\acro{laa}[LAA]{load altering attack}
\acro{cst}[CST]{Chi-squared test}
\acro{lstm}[LSTM]{long short-term memory}
\acro{ml}[ML]{machine learning}
\acro{mppt}[MPPT]{maximum power point tracking}
\acro{opf}[OPF]{optimal power flow}
\acro{pca}[PCA]{principal component analysis}
\acro{pcc}[PCC]{point of common coupling}
\acro{ping}[PCNSA]{Physically Consistent Null Space Alignment} 
\acro{plc}[PLC]{programmable logic controller}
\acro{pmu}[PMU]{phasor measurement unit}
\acro{pscp}[PSCP]{Pseudo-null Space Conserved data Preprocessing}
\acro{rom}[ROM]{reduced order modeling}
\acro{std}[std.]{standard deviation}
\acrodefplural{std}[std]{standard deviations}
\acro{se}[SE]{state-estimation}
\acro{svd}[SVD]{singular value decomposition}
\acro{vmd}[VMD]{variational mode decomposition}
\acro{wls}[WLS]{weighted least squares}
\acro{wse}[WSE]{wavelet singular entropy} 
\acro{xtm}[XTM]{transformer-based detector}  
\acro{ae}[AE]{autoencoder-based detector}  
\end{acronym}

\end{document}